\theoremstyle{plain}
\newtheorem{theorem}{Theorem}[section]
\newtheorem{proposition}[theorem]{Proposition}
\newtheorem{lemma}[theorem]{Lemma}
\theoremstyle{definition}
\newtheorem{assumption}[theorem]{Assumption}
\theoremstyle{remark}
\newcommand{\SysName}{\mbox{SMoFi}\xspace}
\title{SMoFi: Step-wise Momentum Fusion for Split Federated Learning on Heterogeneous Data}
\author{ 
Mingkun Yang\textsuperscript{}, Ran Zhu\textsuperscript{}, Qing Wang\textsuperscript{}, Jie Yang\textsuperscript{}
}
\begin{document}
\maketitle

\begin{abstract}
\label{sec_abstract}

Split Federated Learning is a system-efficient federated learning paradigm that leverages the rich computing resources at a central server to train model partitions. Data heterogeneity across silos, however, presents a major challenge undermining the convergence speed and accuracy of the global model. This paper introduces \textbf{Step-wise Momentum Fusion (SMoFi)}, an effective and lightweight framework that counteracts gradient divergence arising from data heterogeneity by synchronizing the momentum buffers across server-side optimizers. To control gradient divergence over the training process, we design a staleness-aware alignment mechanism that imposes constraints on gradient updates of the server-side submodel at each optimization step. Extensive validations on multiple real-world datasets show that SMoFi consistently improves global model accuracy (up to 7.1\%) and convergence speed (up to 10.25$\times$). Furthermore, SMoFi has a greater impact with more clients involved and deeper learning models, making it particularly suitable for model training in resource-constrained contexts.
\end{abstract}

\section{Introduction}
\label{sec:intro}
The proliferation of mobile and sensing devices~\citep{gubbi2013internet, fortino2014internet} has resulted in rich data at the edge, enabling a new range of Artificial Intelligence of Things applications~\citep{wang2020convergence, verbraeken2020survey}. In this context, Federated Learning (FL) has been proposed as an important learning paradigm that exploits data generated at distributed edge devices while preserving data privacy~\citep{konevcny2016federated, mcmahan2017communication, bonawitz2019towards}. On-device training of large models, however, remains a key challenge due to limited computing resources at the edge. To expedite on-device model training, prior work has explored collaborative model training, resorting to richer computing resources ~\citep{li2018deep, eshratifar2019jointdnn, wang2021hivemind}. A promising approach is split learning~\citep{gupta2018distributed, vepakomma2018no, singh2019detailed, gao2020end, thapa2022splitfed}, which divides a model into submodels trained separately on the edge and cloud, thereby offloading partial training overhead to the powerful server.

The effectiveness of split FL, in terms of both model accuracy and convergence, is largely undermined by data heterogeneity--i.e., non-IID (non-identically and/or independently distributed) data silos--often presenting in real-world scenarios. Training on such non-IID data introduces inconsistency in model updates across both client-side and server-side submodels; such a divergence accumulates as the training proceeds over iterations. Consequently, aggregation of diverse model updates results in inferior accuracy and slower convergence of the global model compared with the IID setting. Existing methods to address such a challenge in FL either modify the loss function~\citep{li2020federated, li2021model, gao2022feddc} or impose robustness constraints on server aggregation~\citep{hsu2019measuring, wang2020tackling, reddi2020adaptive, shi2025fedlws}. While many of them are adaptable to split context, they overlook a unique attribute of split learning: the server directly controls the learning pace of multiple surrogate server-side models (each paired with a client-side submodel) that typically constitute the majority of the full model. This leads us to ask: \textit{Can we impose constraints on model training in split FL by leveraging its inherent client-server interaction--enforced more tightly and synchronously than in conventional FL--without introducing additional overheads or privacy risk?}

\begin{figure}[t]
    \centering
    \includegraphics[width=0.9\linewidth]{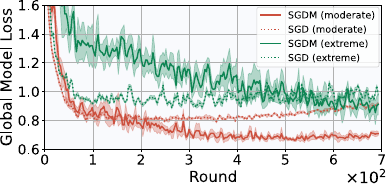}
    \vspace{-2mm}
    \caption{Momentum in local optimizers improves model performance on both moderate and extreme non-IID data in the long run, albeit slows down the learning.}
    \vspace{-4mm}
    \label{fig:motivation}
\end{figure}

In this paper, we propose \SysName, a split FL framework that fuses momentum on a step basis to reduce weight inconsistency for learning with non-IID data. Momentum, as found in previous work, can improve FL in general to reach better model performance~\citep{wangslowmo}; straightforward integration of stochastic gradient descent with momentum (SGDM) into local training of FL, however, slows down the learning as the local models converges towards their respective local optima better, making increasingly divergent updates, as shown in Figure~\ref{fig:motivation}. Such an effect is more significant when the data is more heterogeneous, e.g., green lines in the figure. In \SysName, we propose to align optimization trajectories across server-side optimizers by synchronizing their momentum buffers in every step. As a result, \SysName turns momentum from a slowing-down factor to a mechanism that speeds up model convergence while at the same time benefiting from the better performance brought about by momentum. By fusing momentum, \SysName makes minimum changes to the existing split FL framework and thus can be plugged into existing FL methods. \SysName is carefully designed to cope with a specific challenge of step-wise momentum fusion: certain devices with fewer batches finish the training earlier in the same round, not contributing to momentum alignment for model updates of the other devices. To maintain the constraint imposed by momentum alignment on the server-side models over the entire round, \SysName fuses historical momentum with a staleness factor.

In summary, we make the following key contributions:
\begin{itemize}
\item
We propose a novel split FL framework that improves the consistency of server-side submodel updates on non-IID data via step-wise momentum fusion.
\item
We introduce the staleness factor to counteract the diminishing momentum buffers synchronization to maintain the effectiveness of momentum fusion over the entire training process, especially as the number of local steps varies across participating clients.
\item
We conduct extensive evaluations and demonstrate that \SysName significantly improves both the model accuracy (up to 7.1\%) and convergence speed (up to 10.25$\times$). Furthermore, our framework provides greater improvements when the number of clients increases and for larger models, making it particularly suitable for FL in resource-constrained contexts.
\end{itemize}
\section{The \SysName Framework}
\label{sec:approach}

\begin{figure*}[t]
    \centering
    \includegraphics[width=0.9\linewidth]{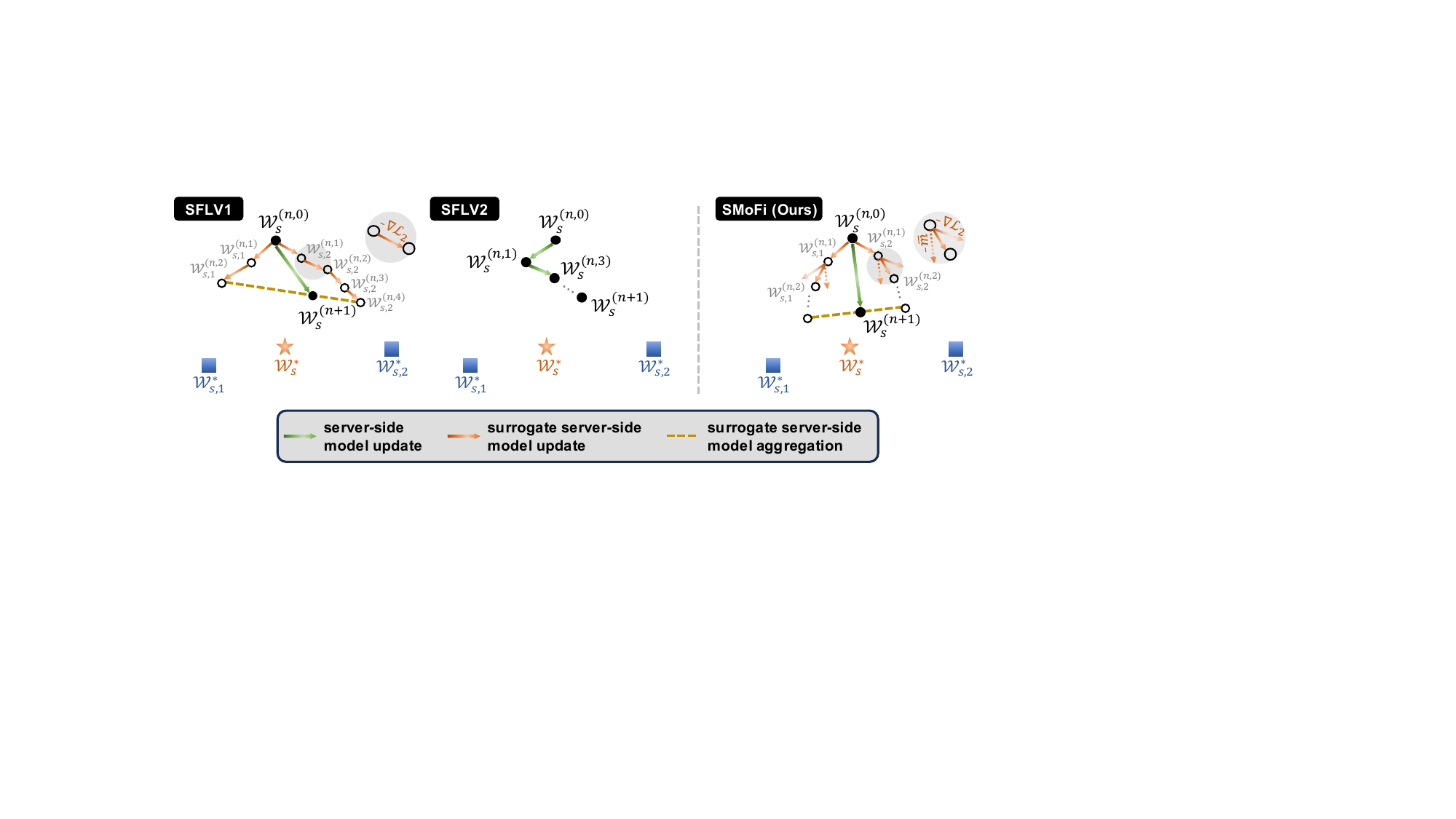}
    \vspace{-2mm}
    \caption{Comparison of server-side model updates $\mathcal{W}_{s}^{(n,0)}\mapsto\mathcal{W}_{s}^{n+1}$ in our \SysName, and the state-of-the-art SFV1 and SFV2: In SFV1, the server updates surrogate server-side models $\mathcal{W}_{s, j}^{(n,\tau)}$ in parallel, and periodically aggregates them--e.g., after each local epoch as illustrated; In SFV2, the server sequentially interacts with clients to update the server-side model; \SysName is akin to the SFV1 where the server updates surrogate models in parallel while introduces momentum alignment at each step $\tau$ by synchronizing the momentum buffers $\bar{m}$ across the server-side solvers. Such alignment helps the aggregated model converge toward the global optimum $\mathcal{W}_{s}^{*}$, rather than local optima $\mathcal{W}_{s,1}^{*}$ or $\mathcal{W}_{s,2}^{*}$.}
    \label{fig:exp_workflow}
\end{figure*}

Our \SysName is built upon the split FL framework to mitigate inconsistencies in weight updates caused by data heterogeneity, thereby facilitating more stable convergence toward the global optimum. Leveraging the modular structure of split FL, we introduce a step-wise momentum fusion as \SysName's central design: at every SGD step, momentum buffers across all server-side solvers are synchronized. This alignment of optimization trajectories imposes constant constraints on inconsistent weight evolution during the training procedure. Our \SysName also maintains a client-transparent architecture--requiring no changes or additional computation on the client side--thereby preserving the same privacy guarantees as existing frameworks such as SFLV1 and SFLV2~\citep{thapa2022splitfed}. In the following, we present the \SysName framework in detail, beginning with the general split FL structure and then presenting our proposed momentum alignment strategy.

\subsection{Collaborative Training in Split FL}
\label{subsec:collaborative_training}

In split FL, a central server with rich computing resources collaborates with a set of clients $\mathcal{J}$, each possessing local data $\mathcal{D}_{j}$ ($j \in \mathcal{J}$), to collaboratively train a task model $f^{\mathcal{W}}$. Given the index of cut layer $L$, the model is split into the client-side model $f^{\mathcal{W}_{c}}$ and the server-side model $f^{\mathcal{W}_{s}}$: $f^{\mathcal{W}}(\cdot) = f^{\mathcal{W}_{s}}(f^{\mathcal{W}_{c}}(\cdot))$, where $\mathcal{W} = [\mathcal{W}_{c}, \mathcal{W}_{s}] = [{\mathcal{W}_1, \cdots, \mathcal{W}_L}, {\mathcal{W}_{L+1}, \cdots, \mathcal{W}_{\lvert\mathcal{W}\lvert}}]$, satisfying $\mathcal{W}\in \mathbb{R}^{d}$, $\mathcal{W}_{c}\in \mathbb{R}^{d_c}$, $\mathcal{W}_{s}\in \mathbb{R}^{d_s}$, and $d=d_c+d_s$. For simplicity, we assume a fixed cut layer across all clients and communication rounds, and that the full model is split into two parts (rather than three parts that leave both the bottom and top submodels with the clients to avoid label sharing). split FL aims to find the optimal $\mathcal{W}^{*}$ as in FL 
\begin{equation}
\mathcal{W}^{*} = 
\mathop{\arg\min}\limits_{\mathcal{W}}\mathcal{L}(\mathcal{W})  = 
\mathop{\arg\min}\limits_{\mathcal{W}} \sum_{j\in \mathcal{J}}{p}_{j}\mathcal{L}_{\mathcal{D}_{j}}(\mathcal{W}),
\label{eq:approach_1}
\end{equation}
wherein, the global objective function $\mathcal{L}(\mathcal{W})$ is the weighted sum of local objectives $\{\mathcal{L}_{\mathcal{D}_{j}}(\mathcal{W})\}_{j\in \mathcal{J}}$, with weights satisfying $\sum_{j\in \mathcal{J}} {p}_{j} = 1$, e.g., the fraction of local samples. The optimal global model parameters $\mathcal{W}^{*}$ are approached by having each client optimize its local objective and then aggregating these local model parameters, iterating over the $N$ communication rounds. 

In round $n\in[N]$ ($[N]=\{1, \cdots, N\}$), considering the communication bandwidth and client availability, the central server randomly selects a subset of devices $\mathcal{J}^{n}\subseteq\mathcal{J}$ to perform SGD updating the $\mathcal{W}_{c}$ and $\mathcal{W}_{s}$ over the $\{T_{j}\}_{j\in \mathcal{J}^{n}}$ local steps. The number of steps $T_{j} = E \lfloor \frac{\lvert \mathcal{D}_{j} \lvert}{B} \rfloor$ ( $\lfloor \cdot \rfloor$ is the floor function) depending on local epochs $E$, mini-batch size $B$, and the size of local samples $\mathcal{D}_{j}$ that varies across clients. 

\noindent{\textbf{One-step SGD.}} All the participating clients perform one step of SGD in parallel. The $j$-th client bootstraps the local SGD by propagating forward on a randomly selected sample batch. It then offloads activations--also called \textit{smashed data} at the layer $L$--to the server. We denote activations as $\mathbf{A}^{(n, \tau)}_{j} = \{f^{\mathcal{W}_{c}}(\mathbf{x})\}_{\mathbf{x}\in\mathcal{B}^{\tau}_{j}}$ where $\tau\in [T_{j}]$ is the SGD step index and $\mathcal{B}^{\tau}_{j}\subseteq\mathcal{D}_{j}$ is the sample batch. The server proceeds with forward propagation $\hat{\mathbf{Y}} = \{f^{\mathcal{W}_{s}}(a)\}_{a\in \mathbf{A}^{(n, \tau)}_{j}}$, followed by gradient descent on the surrogate server-side model $\mathcal{W}_{s, j}^{(n, \tau+1)} = \mathcal{W}_{s, j}^{(n, \tau)} - \eta \nabla\mathcal{L}_{\mathcal{B}^{\tau}_{j}}(\mathcal{W}_{s, j}^{(n, \tau)})$ with the learning rate $\eta$. Specifically, the stochastic gradients on the server-side model are $\nabla\mathcal{L}_{\mathcal{B}^{\tau}_{j}}(\mathcal{W}_{s, j}^{(n, \tau)}) = \frac{1}{\lvert \mathcal{B}^{\tau}_{j} \lvert} \sum_{\mathbf{x}\in\mathcal{B}^{\tau}_{j}} \nabla l(\mathbf{x}; \mathcal{W}_{s, j}^{(n, \tau)})$, and $l(\cdot)$ is the loss function, e.g., cross-entropy loss between $\hat{\mathbf{Y}}$ and labels shared from clients\footnote{Real-world split FL implementation partitions the full model into three parts: predictions $\hat{\mathbf{Y}}$ from the top-submodel remain local; thus loss calculation is performed by clients without label sharing.}. The server sends the gradients on the cut layer back to client $j$, which then backpropagates through the local model following the chain rule: $\mathcal{W}_{c, j}^{(n, \tau+1)} = \mathcal{W}_{c, j}^{(n, \tau)} - \eta \nabla\mathcal{L}_{\mathcal{B}^{\tau}_{j}}(\mathcal{W}_{c, j}^{(n, \tau)})$.

\noindent{\textbf{One-step SGDM.}} When using SGDM~\citep{polyak1964some, liu2020improved} as the optimizer, the updating rule for the server-side submodel is reformulated as
\begin{equation}
m_{s, j}^{(n, \tau+1)} = 
\beta m_{s, j}^{(n, \tau)} + \nabla\mathcal{L}_{\mathcal{B}^{\tau}_{j}}(\mathcal{W}_{s, j}^{(n, \tau)}),
\label{eq:approach_2}
\end{equation}
\begin{equation}
\mathcal{W}_{s, j}^{(n, \tau+1)} = 
\mathcal{W}_{s, j}^{(n, \tau)} - \eta m_{s, j}^{(n, \tau+1)},
\label{eq:approach_3}
\end{equation}
where $\beta$ is the momentum coefficient, and $m_{s, j}^{(n, \tau)}$ is the momentum buffer retaining the gradients in the history steps. Similarly, each client updates the local submodel based on the gradients $\nabla\mathcal{L}_{\mathcal{B}^{\tau}_{j}}(\mathcal{W}_{c, j}^{(n, \tau)})$ and momentum buffer $m_{c, j}^{(n, \tau)}$.

\subsection{\SysName Design}
\label{subsec:design}

Benefiting from our client-transparent design in \SysName, we can focus on the server-side optimization. Broadly speaking, there are two main strategies for updating the server-side submodel in split FL: 1) \textit{Parallel updating}, where the server updates the $\lvert \mathcal{J}^{n} \lvert$ surrogate submodels in parallel and periodically synchronizes the submodels by weighted averaging, as in SFLV1; and 2) \textit{Sequential updating}, where the server retains a single submodel and update it by sequentially training with clients in the randomized order, as in SFLV2. As illustrated in Figure~\ref{fig:exp_workflow}, the updates of surrogate server-side models in SFLV1 are independent across local steps. Even when using SGDM as the optimizer, each surrogate converges toward a diverged local optimum, as it minimizes objective $\{\mathcal{L}_{\mathcal{B}^{\tau}_{j}}(\cdot)\}_{j\in\mathcal{J}^{n}}$ inconsistent across clients due to heterogeneous data silos and varying local steps $\{T_{j}\}_{j\in \mathcal{J}^{n}}$~\citep{wang2020tackling}. Besides, the sequential training in SFLV2 introduces severe latency in practical deployment. Considering both efficiency and efficacy, \SysName adopts parallel updates of the surrogate server-side submodels, while imposing consistency constraints through momentum alignment to mitigate divergent convergence trajectories during collaborative training in split FL.

\noindent{\textbf{Momentum Alignment.}} In the \textit{Parallel updating} framework, the server maintains $\lvert \mathcal{J}^{n} \lvert$ optimizers where each optimizer has a momentum buffer $m_{s, j}^{(n, \tau)}$ tracking the accumulated gradients of the $j$-th server-side model. Under objective inconsistency, these momentum buffers $\{m_{s, j}^{(n, \tau)}\}_{j\in\mathcal{J}^{n}}$ gradually diverge as training proceeds. The momentum buffers then further negatively affect gradient descent, causing the models across clients to converge in increasingly divergent directions. To address this, \SysName aligns the momentum buffers across the server-side optimizers, and Equation~\ref{eq:approach_2} is then reformulated as
\begin{equation}
m_{s, j}^{(n, \tau+1)} = 
\beta \bar{m}_{s}^{(n, \tau)} + \nabla\mathcal{L}_{\mathcal{B}^{\tau}_{j}}(\mathcal{W}_{s, j}^{(n, \tau)}),
\label{eq:approach_4}
\end{equation}
where $\bar{m}_{s}^{(n, \tau)}$ is the aligned momentum, synchronized after all server-side solvers perform gradient descent at each local step. In this way, each server-side model is updated based on stochastic gradients over its local mini-batch samples, combined with a unified momentum term (\textbf{line 7} in Algorithm~\ref{algo:pseudocode_1}).

\SysName synchronizes the momentum buffer by weighted averaging the momentum in all server-side solvers at each local step. However, as training progresses, the number of active server-side solvers contributing to this average decreases because some surrogate models complete their training earlier than others. Specifically, the set of active clients at the $\tau$-th step, denoted by $\mathcal{J}^{(n, \tau)} \subseteq \mathcal{J}^{n}$, includes only those clients for which $T_j \geqslant \tau$. Given that $T_j$ varies across the clients due to non-IID local data, the size of the active set $\lvert \mathcal{J}^{(n, \tau)} \lvert$ tends to decrease over local steps. It leads to fewer solver momentum being averaged as training progresses, thereby diminishing the strength of the constraint from momentum alignment.

\noindent{\textbf{Staleness Factor.}} To maintain the effectiveness of constraint throughout all $max\{T_j\}_{j \in \mathcal{J}^{n}}$ local steps, \SysName introduces a mechanism that leverages $\mathcal{H}^{(n)}$, a record of the momentum buffers $m_{s, j}^{(n, \tau + 1)}$ at the final step of the client, i.e., when $\tau = T_j $, $\forall j \in \mathcal{J}^{(n, \tau)}$.  At each local step, \SysName aligns the momentum buffer for the next step (\textbf{line 12} in Algorithm~\ref{algo:pseudocode_1}) by averaging both the current momentum of optimizers and the historical state stored in $\mathcal{H}^{(n)}$:
\begin{equation}
\small
\bar{m}_{s}^{(n, \tau+1)} = 
\frac
{\sum_{j\in\mathcal{J}^{(n, \tau)}}m_{s, j}^{(n, \tau+1)} + \sum_{j\in\mathcal{H}^{n}}s_{\alpha}(\tau)m_{s, j}^{(n, \lvert T_{j} \lvert + 1)}}
{\lvert \mathcal{J}^{(n, \tau)} \lvert + \lvert \mathcal{H}^{n} \lvert},
\label{eq:approach_5}
\end{equation}
where  $s_{\alpha}$ is the staleness of the historical momentum and $\lvert \mathcal{J}^{(n, \tau)} \lvert + \lvert \mathcal{H}^{n} \lvert=\lvert \mathcal{J}^{n} \lvert$, $\forall \tau\in[max\{T_j\}_{j\in\mathcal{J}^{n}}]$. The momentums in the current step are equally important to the momentum alignment as gradients are calculated on the same size of mini-batch across surrogate server-side solvers. For the historical ones, we employ a polynomial staleness factor~\cite {xie2019asynchronous}, satisfying
\begin{equation}
s_{\alpha} = (\tau-\lvert T_{j} \lvert+1)^{\alpha}, \alpha<0.
\label{eq:approach_6}
\end{equation}
In this way, the number of momentum buffers that contribute to the synchronization remains constant $\lvert \mathcal{J}^{(n)} \lvert$ at each step.

\begin{algorithm}[t]
\small
\KwIn{Selected clients $\mathcal{J}^{n}$; Cut layer $L$; Numbers of local steps $\{T_{j}\}_{j\in\mathcal{J}^{n}}$; Current global model weights $\mathcal{W}^{(n-1)}$; Learning rate $\eta$; Momentum coefficient $\beta$.\\}
\KwOut{$\{\mathcal{W}_{s, j}^{(n, T_{j})} \}_{j\in\mathcal{J}^{n}}$\\}
$\{\mathcal{W}_{s, j}^{(n, 0)} \}_{j\in\mathcal{J}^{n}}\gets\mathcal{W}^{(n-1)}_{L:-1}$, \colorbox{pink}{$\mathcal{H}^{n}$}$\gets\emptyset$, \colorbox{pink}{$\bar{m}_{s}^{(n, 0)}$}$\gets\mathbf{0}$\\
\textbf{Server Executes:}\\
\For{step $\tau = 0, 1, \cdots, max\{T_{j}\}_{j\in\mathcal{J}^{n}}-1$}
{
    \For{$j\in \mathcal{J}^{(n, \tau)}$ \textbf{in parallel}}
    {
    \tcp*[h]{Server-side Backpropagation}\\
    $\mathbf{A}^{(n, \tau)}_{j}\gets$Collecting local activation\\
    $\nabla\mathcal{L}_{\mathcal{B}^{\tau}_{j}}(\mathcal{W}_{s, j}^{(n, \tau)})\gets$Stochastic gradients on $\mathbf{A}^{(n, \tau)}_{j}$\\
    $m_{s, j}^{(n, \tau+1)} \gets \beta$\colorbox{pink}{$\bar{m}_{s}^{(n,\tau)}$} $+\nabla\mathcal{L}_{\mathcal{B}^{\tau}_{j}}(\mathcal{W}_{s, j}^{(n, \tau)})$ \\
    $\mathcal{W}_{s, j}^{(n, \tau+1)} \gets \mathcal{W}_{s, j}^{(n, \tau)} - \eta m_{s, j}^{(n, \tau+1)}$\\
    $\nabla\mathcal{L}_{\mathcal{B}^{\tau}_{j}}({\mathcal{W}_{L}}_{s, j}^{(n, \tau)})\gets$Sending back to client $j$\\
    \tcp*[h]{Historical Momentum Update}\\
    \If{$\tau = \lvert T_{j} \lvert$}
        {
        \colorbox{pink}{$\mathcal{H}^{n}$}$\gets$ Recording $m_{s, j}^{(n, \tau+1)}$\\
        }
    }
    \tcp*[h]{Momentum Alignment}\\
    \colorbox{pink}{$\bar{m}_{s}^{(n, \tau+1)}$}$\gets$ Updating by Equation~\ref{eq:approach_5}\\
}
\caption{The Step-wise Momentum Fusion (SMoFi) in the $n$-th round}
\label{algo:pseudocode_1}
\end{algorithm}

The full workflow of split FL integrating \SysName is detailed in Appendix~\ref{subsec:framework_workflow}. The central server\footnote{In SFLV1/SFLV2, a fed server is used to aggregate the client-side submodels for the client-side global model. For simplicity, we let the central server performs the operations of the fed server.} first aggregates both client-side and server-side submodels at the end of the communication round: $\bar{\mathcal{W}}^{n}=\sum_{j\in\mathcal{J}^{n}}p_{j}\mathcal{W}_{j}^{n}$ where $\mathcal{W}_{j}^{n}=[\mathcal{W}_{c, j}^{(n, T_{j})}, \mathcal{W}_{s, j}^{(n, T_{j})}]$. Similar to work~\citep{hsu2019measuring}, we update the global model with momentum: $\mathcal{W}^{n} = \mathcal{W}^{n-1}-m_{g}^{n}$. The global momentum buffer is updated following $m_{g}^{n} = \beta_{g}m_{g}^{n-1} + \mathcal{W}^{n-1} - \bar{\mathcal{W}}^{n}$ and $\beta_{g}$ is the global momentum coefficient.

\section{Convergence Analysis}
\label{sec:convergence_analysis}
In this section, we provide the convergence analysis of \SysName under the practical partial client participation. Detailed analysis and complete proofs are provided in Appendix~\ref{appendix_convergence}. We start by stating assumptions commonly adopted in prior work~\citep{acarfederated, rodio2023federated}.
\begin{assumption}
($L$-Smooth Objectives)
The local objective $\mathcal{L}_j = \mathcal{L}_{\mathcal{D}_{j}}, \forall j \in \mathcal{J}$ is L-smooth ($L > 0$), i.e., $\forall \mathcal{W}, \mathcal{W}^{\prime}$, it satisfies
\begin{equation}
\mathcal{L}_{j}(\mathcal{W}) 
\leq 
\mathcal{L}_{j}(\mathcal{W}^{\prime}) + 
\langle \nabla\mathcal{L}_{j}(\mathcal{W}^{\prime}), \mathcal{W} - \mathcal{W}^{\prime} \rangle +
\frac{L}{2}\Vert\mathcal{W}-\mathcal{W}^{\prime}\Vert_{2}^{2}.
\label{eq:main_convergence_1}   
\end{equation}
\label{main_assumption_1}
\end{assumption}

\begin{assumption}
($\mu$-Strongly Convex Objectives)
The local objectives $\mathcal{L}_{\mathcal{D}_{1}}, \cdots, \mathcal{L}_{\mathcal{D}_{\lvert \mathcal{J} \lvert}}$ are all convex, i.e., $\forall \mathcal{W}, \mathcal{W}^{\prime}$, it satisfies
\begin{equation}
\mathcal{L}_{j}(\mathcal{W}) 
\geq 
\mathcal{L}_{j}(\mathcal{W}^{\prime}) + 
\langle \nabla\mathcal{L}_{j}(\mathcal{W}^{\prime}), \mathcal{W} - \mathcal{W}^{\prime} \rangle +
\frac{\mu}{2}\Vert\mathcal{W}-\mathcal{W}^{\prime}\Vert_{2}^{2}.
\label{eq:main_convergence_2}   
\end{equation}
\label{main_assumption_2}
\end{assumption}

\begin{assumption}
(Unbiased Gradient and Bounded Variance)
For mini-batch $\mathcal{B}^{\tau}_{j}$ uniformly sampled at random from local data of $j$-th client $\mathcal{D}_{j}$, the resulting stochastic gradient is unbiased to the gradient entire local dataset, that is, $\mathbb{E}_{\mathcal{B}^{\tau}_{j}\sim\mathcal{D}_{j}}[\nabla\mathcal{L}_{\mathcal{B}^{\tau}_{j}}(\mathcal{W})] = \nabla\mathcal{L}_{j}(\mathcal{W})$. Also, the variance of the stochastic gradient is bounded, i.e., $\forall \tau, j\in\mathcal{J}$, there exists $\sigma$ satisfying
\begin{equation}
\mathbb{E}_{\mathcal{B}^{\tau}_{j}\sim\mathcal{D}_{j}}[\Vert \nabla\mathcal{L}_{\mathcal{B}^{\tau}_{j}}(\mathcal{W}) - \nabla\mathcal{L}_{j}(\mathcal{W}) \Vert_{2}^{2}]
\leq
\sigma^{2}.
\label{eq:main_convergence_3}
\end{equation}
\label{main_assumption_3}
\end{assumption}

\begin{assumption}
(Bounded Gradients) The stochastic gradient is bounded; i.e., $\forall \tau, j\in\mathcal{J}$ there exists $G$ satisfying $\mathbb{E}_{\mathcal{B}^{\tau}_{j}\sim\mathcal{D}_{j}}[\Vert \nabla\mathcal{L}_{\mathcal{B}^{\tau}_{j}}(\mathcal{W}) \Vert_{2}^{2}] \leq G^{2}$.
\label{main_assumption_4}
\end{assumption}

The work~\citep{han2024convergence} offers convergence guarantees for both SFLV1 and SFLV2. The convergence analysis of \SysName follows the idea of this work, as \SysName introduces modifications to the SFLV1 workflow. Based on Assumptions~\ref{main_assumption_1}-~\ref{main_assumption_4}, we provide the convergence guarantee for \SysName in Theorem~\ref{main_theorem_1}.

\begin{theorem}
Under the Assumptions~\ref{main_assumption_1}, \ref{main_assumption_2}, \ref{main_assumption_3}, and \ref{main_assumption_4}, \SysName has the similar convergence guarantees with SFLV1 with the momentum SGD as the optimization solver. Given the predefined communication rounds $N$, client participation rate $\theta$, and a small enough learning rate $\eta^{n}=\frac{4}{\mu(\gamma+n)}$, the error between the global model at $N$-th round and the global optimum is bounded by
\begin{equation}
    \scriptsize
    \mathbb{E}[\mathcal{L} ({\mathcal{W}^{N})}] - \mathcal{L}{({\mathcal{W}^{*})}} \leq 
    \mathcal{O}(\frac{A} {(\gamma+N)})
    +
    \mathcal{O}(\frac{B} {(\gamma+N)})
    +
    \mathcal{O}(\frac{C} {{(\gamma+N)}}).
\end{equation}

The $A$, $B$, $C$, and $\gamma$ in the error bound follows $A = \lvert\mathcal{J}\lvert \sum_{j\in\mathcal{J}}p_{j}^{2}(2\sigma^{2}+(1+\frac{1}{\theta})G^{2})$, $B = \sum_{j\in\mathcal{J}}p_{j}(2\sigma^{2}+G^{2})$, $C = \Vert \mathcal{W}^{0} - \mathcal{W}^{*} \Vert$, and $\gamma = 8L/\mu-1$.
\label{main_theorem_1}
\end{theorem}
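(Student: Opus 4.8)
The plan is to follow the convergence framework for SFLV1~\citep{han2024convergence}, tracking the expected squared distance $a_n := \mathbb{E}[\Vert\mathcal{W}^n - \mathcal{W}^*\Vert_2^2]$ across communication rounds and establishing a one-round contraction of the form $a_{n+1} \leq (1 - \mu\eta^n) a_n + (\eta^n)^2 \Gamma$, where $\Gamma$ aggregates the constant-order noise terms that eventually yield the $A$ and $B$ factors. Since \SysName alters SFLV1 only by synchronizing the server-side momentum buffers at each step, the skeleton of the argument is inherited; the novelty lies in showing that momentum fusion preserves---and does not degrade---this contraction.

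First I would expand the global update $\mathcal{W}^n = \mathcal{W}^{n-1} - m_g^n$ and unroll the per-step server-side SGDM updates in Equations~\ref{eq:approach_3} and~\ref{eq:approach_4}, expressing the aggregated displacement $\mathcal{W}^{n-1} - \bar{\mathcal{W}}^n$ as a weighted sum over clients of their accumulated momentum-weighted gradients. Taking the conditional expectation and invoking Assumption~\ref{main_assumption_3}, the cross term splits into a deterministic descent direction aligned with $\nabla\mathcal{L}$ and a zero-mean stochastic part. Strong convexity (Assumption~\ref{main_assumption_2}) converts the inner product $\langle \nabla\mathcal{L}(\mathcal{W}^{n-1}), \mathcal{W}^{n-1} - \mathcal{W}^*\rangle$ into the contraction factor $(1-\mu\eta^n)$ acting on $a_n$ together with an optimality-gap term, while $L$-smoothness (Assumption~\ref{main_assumption_1}) bounds the quadratic displacement.

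The crux is controlling the aligned momentum $\bar{m}_s^{(n,\tau)}$. Unrolling Equation~\ref{eq:approach_4} shows that each buffer is a geometrically discounted sum $\sum_{k\leq\tau}\beta^{\tau-k}\nabla\mathcal{L}_{\mathcal{B}^k_j}$ of past stochastic gradients, so its norm is uniformly bounded by $G/(1-\beta)$ through Assumption~\ref{main_assumption_4}. I would then bound the deviation between the fused direction $\bar{m}_s$ and the current gradient---the source of the client-drift error---via the bounded-variance and bounded-gradient assumptions, and verify that the staleness factor $s_{\alpha}(\tau)$ of Equation~\ref{eq:approach_6} keeps the effective number of contributing buffers equal to $\lvert\mathcal{J}^n\lvert$, so that each averaging weight stays $\mathcal{O}(1/\lvert\mathcal{J}^n\lvert)$ throughout and the drift does not inflate over the $\max\{T_j\}_{j\in\mathcal{J}^n}$ steps. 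Partial participation at rate $\theta$ enters through the variance of client sampling and contributes the $(1+\tfrac{1}{\theta})G^2$ term inside $A$.

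Assembling these bounds yields the recursion $a_{n+1}\leq(1-\mu\eta^n)a_n+(\eta^n)^2\Gamma$ with $\Gamma=\mathcal{O}(A+B)$. I would close the argument with the standard diminishing-step-size lemma: with $\eta^n=\frac{4}{\mu(\gamma+n)}$ and $\gamma=8L/\mu-1$, induction on $n$ gives $a_N\leq\mathcal{O}(1/(\gamma+N))$, the numerator absorbing $A$, $B$, and the initial gap $C=\Vert\mathcal{W}^0-\mathcal{W}^*\Vert$. A final application of $L$-smoothness, namely $\mathcal{L}(\mathcal{W}^N)-\mathcal{L}(\mathcal{W}^*)\leq\frac{L}{2}a_N$, then delivers the stated three-term bound. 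The main obstacle is the third step: disentangling the momentum recursion---coupled across the $\tau$ steps within a round and, through warm-started buffers, implicitly across rounds---from the stochastic-gradient noise, so that the fused momentum contributes only an $\mathcal{O}((\eta^n)^2)$ error rather than a persistent bias that would break the contraction.
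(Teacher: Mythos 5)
Your skeleton coincides with the paper's actual proof: track $\mathbb{E}[\Vert\mathcal{W}^n-\mathcal{W}^*\Vert_2^2]$, derive a one-round contraction of the form $(1-\Theta(\mu\eta^n))\Delta^{n-1}+\mathcal{O}((\eta^n)^2)$ from strong convexity, handle partial participation by Bernoulli sampling of clients (this is exactly where the paper's $(1+\frac{1}{\theta})G^2$ term in $A$ arises, via $\mathbb{E}[\Vert\delta_j^n\Vert_2^2]=\theta$), run the induction with $\eta^n=\frac{4}{\mu(\gamma+n)}$ and $\gamma=8L/\mu-1$, and finish with $L$-smoothness at the optimum. The paper additionally routes the final step through a decomposition into client-side and server-side submodel errors (Proposition~\ref{supp_proposition_1}), but since $\mathcal{W}=[\mathcal{W}_c,\mathcal{W}_s]$ and squared norms add across the concatenation, your direct treatment of the full parameter vector is equivalent.

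The genuine divergence is your third step, and it cuts in an unexpected direction. You treat the aligned momentum as the crux: unroll Equation~\ref{eq:approach_4} into a discounted sum of past stochastic gradients bounded by $G/(1-\beta)$, argue that the staleness-weighted average in Equation~\ref{eq:approach_5} keeps per-buffer weights at $\mathcal{O}(1/\lvert\mathcal{J}^n\rvert)$, and show the resulting bias contributes only $\mathcal{O}((\eta^n)^2)$ to the recursion. None of this appears in the paper. The paper's proof models the round update as plain accumulated SGD, $\hat{\mathcal{W}}^n=\hat{\mathcal{W}}^{n-1}-\eta^n\sum_{j}p_j\sum_{\tau}g_j^{(n,\tau)}$, following the SFLV1 analysis of \citep{han2024convergence}; the momentum buffers, the step-wise fusion, the staleness factor $s_\alpha$, and the global momentum $m_g^n$ never enter the analysis. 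In effect the paper proves the bound for momentum-free parallel-SGD dynamics and transfers it to \SysName by assertion, which is what the theorem's phrase ``similar convergence guarantees with SFLV1'' is doing.

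The consequence for your proposal: the step you correctly flag as the main obstacle is left unresolved, and it is not routine. With buffers warm-started within a round and $\beta$ near 1, the update direction is a discounted sum of gradients evaluated at stale iterates; converting this into a clean contraction with only $\mathcal{O}((\eta^n)^2)$ slack typically requires an auxiliary-sequence device (e.g., analyzing $z^n=\mathcal{W}^n+\frac{\beta}{1-\beta}(\mathcal{W}^n-\mathcal{W}^{n-1})$, which follows a near-SGD recursion) and inflates constants by factors of $1/(1-\beta)$, and the staleness-weighted fusion across clients with unequal $T_j$ adds coupling for which there is no off-the-shelf lemma. So as written your proof is incomplete at precisely that point—but incomplete in the direction of being more faithful to the algorithm than the paper's own proof, which sidesteps the difficulty by never modeling the momentum at all. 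If you either close that step or explicitly adopt the same simplification the paper makes (analyze the momentum-free surrogate dynamics), the remainder of your argument goes through and reproduces the stated bound.
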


It indicates that the convergence bound of \SysName achieves an order of $\mathcal{O}(1/N)$.
\section{Experimental Results}
\label{sec:exp}

\begin{table*}[t]
\vspace{-3mm}
\caption{Performance comparison between \SysName and momentum-based counterparts across \textit{three baseline methods} and \textit{three benchmark datasets}. Methods denoted with \textbf{+} represent baselines \textbf{combined} with \SysName or its counterparts. We report the average and standard deviation of \textit{Top-1 accuracy}, the number of communication rounds (\textbf{R}) required to reach the target accuracy (i.e., 90\% of the best global model accuracy by FedAvg), and the corresponding convergence speedup (\textbf{R}$\uparrow$). All results are averaged over three trials, with \textbf{bold} font indicating the best performance for each setup.}
\resizebox{\linewidth}{!}{
\begin{tabular}{lcccccccccc}
\toprule[1pt]
\textbf{Setup} && \multicolumn{2}{c}{\textbf{CIFAR-10/DIR{\scriptsize\textbf{100}}\textbf{(0.2)}}}    && \multicolumn{2}{c}{\textbf{CIFAR-100/DIR{\scriptsize\textbf{100}}\textbf{(0.2)}}}        && \multicolumn{2}{c}{\textbf{Tiny-ImageNet/DIR{\scriptsize\textbf{200}}\textbf{(0.2)}}}\\ 
\cmidrule(r){1-1} \cmidrule(r){3-4} \cmidrule(r){6-7} \cmidrule(r){9-10}
\textbf{Methods}  && \textbf{Acc. (\%)} & \textbf{R}/\textbf{R}$\uparrow$ && \textbf{Acc. (\%)} & \textbf{R}/\textbf{R}$\uparrow$ && \textbf{Acc. (\%)} & \textbf{R}/\textbf{R}$\uparrow$ \\ \hline 
\textit{FedAvg}~\citep{mcmahan2017}  && 77.16\scriptsize$\pm$0.11 & 258/1.00$\times$  && 48.10\scriptsize $\pm$0.36 & 183/1.00$\times$  && 33.43\scriptsize$\pm$0.12 & 161/1.00$\times$ \\
\textbf{+} FedAvgM~\citep{hsu2019measuring} && 79.19\scriptsize$\pm$0.09 & 190/1.36$\times$  && 50.28\scriptsize $\pm$0.26 & 126/1.45$\times$  && 33.58\scriptsize$\pm$0.34 & 57/2.82$\times$  \\
\textbf{+} SlowMo~\citep{wang2019slowmo} && 76.54\scriptsize$\pm$0.06 & 177/1.46$\times$  && 50.96\scriptsize $\pm$0.23 & 125/1.46$\times$  && 33.82\scriptsize$\pm$0.29 & 44/3.66$\times$ \\
\textbf{+} FedNAG~\citep{yang2022federated} && 78.24\scriptsize$\pm$0.43 & 170/1.52$\times$  && 48.30\scriptsize $\pm$1.06 & 198/0.92$\times$  && 30.94\scriptsize$\pm$0.44 & 335/0.48$\times$ \\
\rowcolor{gray!20} \textbf{+} \SysName && \textbf{81.82\scriptsize$\pm$0.61} & \textbf{56/4.61$\times$}  && \textbf{53.83\scriptsize $\pm$0.79} & \textbf{64/2.86$\times$}  && \textbf{39.73\scriptsize$\pm$0.05} & \textbf{16/10.06$\times$}  \\ \hline

\textit{FedProx}~\citep{li2020federated}  && 77.38\scriptsize $\pm$0.01 & 167/1.00$\times$  && 48.67\scriptsize $\pm$0.06 & 175/1.00$\times$  && 34.86\scriptsize $\pm$0.89 & 120/1.00$\times$ \\
\textbf{+} FedAvgM~\citep{hsu2019measuring} && 79.26\scriptsize $\pm$0.65 & 207/0.81$\times$  && 50.45\scriptsize $\pm$0.13 & 111/1.58$\times$  && 34.25\scriptsize $\pm$0.20 & 50/2.40$\times$ \\
\textbf{+} SlowMo~\citep{wang2019slowmo} && 76.63\scriptsize $\pm$0.09 & 210/0.80$\times$  && 51.44\scriptsize $\pm$0.65 & 122/1.43$\times$  && 33.67\scriptsize $\pm$0.22 & 85/1.41$\times$ \\
\textbf{+} FedNAG~\citep{yang2022federated} && 77.59\scriptsize $\pm$0.80 & 200/0.84$\times$  && 48.95\scriptsize $\pm$0.04 & 169/1.04$\times$  && 31.20\scriptsize $\pm$0.21 & 284/0.42$\times$ \\
\rowcolor{gray!20} \textbf{+} \SysName && \textbf{81.99\scriptsize $\pm$0.37} & \textbf{54/3.09$\times$}  && \textbf{54.03\scriptsize $\pm$0.31} & \textbf{71/2.46$\times$}  && \textbf{40.79\scriptsize $\pm$0.13} & \textbf{13/9.23$\times$} \\ \hline

\textit{FedNAR}~\citep{li2023fednar}  && 77.21\scriptsize $\pm$0.06 & 255/1.00$\times$  && 48.02\scriptsize $\pm$0.31 & 183/1.00$\times$  && 33.37\scriptsize $\pm$0.34 & 164/1.00$\times$ \\
\textbf{+} FedAvgM~\citep{hsu2019measuring} && 79.21\scriptsize $\pm$0.47 & 190/1.34$\times$  && 50.80\scriptsize $\pm$0.23 & 120/1.53$\times$  && 33.63\scriptsize $\pm$0.76 & 105/1.56$\times$ \\
\textbf{+} SlowMo~\citep{wang2019slowmo} && 76.71\scriptsize $\pm$0.20 & 199/1.28$\times$  && 51.94\scriptsize $\pm$0.18 & 123/1.49$\times$  && 33.57\scriptsize $\pm$0.48 & 161/1.02$\times$ \\
\textbf{+} FedNAG~\citep{yang2022federated} && 77.94\scriptsize $\pm$0.53 & 199/1.28$\times$  && 48.51\scriptsize $\pm$0.24 & 175/1.05$\times$  && 34.17\scriptsize $\pm$0.27 & 63/2.60$\times$ \\
\rowcolor{gray!20} \textbf{+} \SysName && \textbf{81.65\scriptsize $\pm$0.65} & \textbf{46/5.54$\times$}  && \textbf{53.72\scriptsize $\pm$0.42} & \textbf{67/2.73$\times$}  && \textbf{40.47\scriptsize $\pm$0.11} & \textbf{16/10.25$\times$}
\\[-0.65ex]
\bottomrule[1pt]
\end{tabular}
}
\vspace{-3mm}
\label{tab:exp_result_momentum}
\end{table*}

\begin{table*}[t]
\caption{Performance comparison between \SysName and split FL methods across three benchmark datasets. We report the average and standard deviation of \textit{Top-1 accuracy}, along with the number of communication rounds (\textbf{R}) and wall-clock time (\textbf{T}) required to reach the target accuracy (i.e., 90\% of the best global model accuracy by FedAvg).}
\resizebox{\linewidth}{!}{
\begin{tabular}{lccccccccccccc}
\toprule[1pt]
\textbf{Setup} && \multicolumn{3}{c}{\textbf{CIFAR-10/DIR{\scriptsize\textbf{100}}\textbf{(0.2)}}}    && \multicolumn{3}{c}{\textbf{CIFAR-100/DIR{\scriptsize\textbf{100}}\textbf{(0.2)}}} && \multicolumn{3}{c}{\textbf{Tiny-ImageNet/DIR{\scriptsize\textbf{200}}\textbf{(0.2)}}}\\ 
\cmidrule(r){1-1} \cmidrule(r){3-5} \cmidrule(r){7-9} \cmidrule(r){11-13}
\textbf{Methods}  && \textbf{Acc. (\%)} & \textbf{R} & \textbf{T (h)} && \textbf{Acc. (\%)} & \textbf{R} & \textbf{T (h)} && \textbf{Acc. (\%)} & \textbf{R} & \textbf{T (h)} \\ \hline 
SFLV1 ($\bar{\tau}=1$)~\citep{thapa2022splitfed}  && 68.10\scriptsize$\pm$0.57 & $>$1000 & $>$551.62 && 38.43\scriptsize $\pm$0.06 & $>$600 & $>$172.66 && 21.81\scriptsize$\pm$0.98 & $>$400 & $>$578.95\\
SFLV1 ($\bar{\tau}=E$)~\citep{thapa2022splitfed}  && 77.84\scriptsize$\pm$0.17 & 69 & 28.62 && 46.68\scriptsize $\pm$0.21 & \textbf{40} & \textbf{10.23} && 35.47\scriptsize$\pm$0.12 & 44 & 58.57\\
SFLV2~\citep{thapa2022splitfed} && 79.42\scriptsize$\pm$0.04 & 278 & 144.50 && 53.64\scriptsize $\pm$0.51 & 143 & 42.58 && 34.72\scriptsize$\pm$0.95 & 310 & 527.48\\
MergeSFL~\citep{liao2024mergesfl}  && 79.47\scriptsize$\pm$0.09 & 76 & \textbf{15.84} && 50.16\scriptsize $\pm$0.20 & 53 & 11.22 && 34.74\scriptsize$\pm$0.55 & 118 & 152.25\\
\rowcolor{gray!20} \SysName && \textbf{81.82\scriptsize$\pm$0.61} & \textbf{56} & 29.02 && \textbf{53.83\scriptsize $\pm$0.79} & 64 & 18.46 && \textbf{39.73\scriptsize$\pm$0.05} & \textbf{16} & \textbf{23.02}
\\[-0.65ex]
\bottomrule[1pt]
\end{tabular}
}
\label{tab:exp_result_SFL}
\end{table*}

\subsection{Experimental Setups}
\label{subsec:exp_setup}

\begin{figure*}[t]
    \centering
    \includegraphics[width=\linewidth]{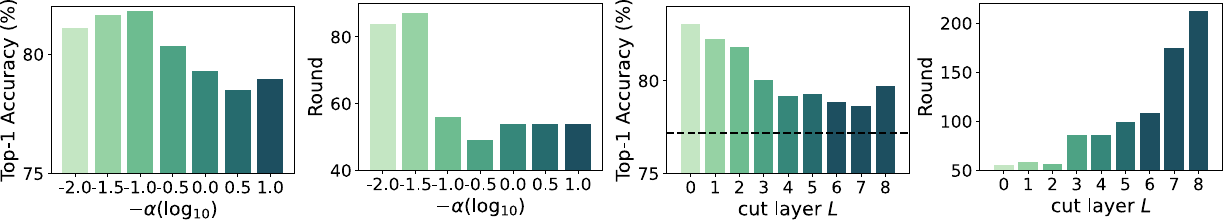}
    \vspace{-5mm}
    \caption{Sensitivity study of \SysName under CIFAR10: (left two) accuracy and convergence under varying staleness factor $\alpha$; (right two) performance under different cut layers $L$. For instance, $L=0$ indicates that all 8 residual blocks and the output block are allocated to the server, while the clients hold only the input block. The dashed line represents the accuracy of FedAvg under the same setting.}
    \vspace{-3mm}
    \label{fig:exp_sensitivity_study}
\end{figure*}

\begin{figure*}[t]
    \centering
    \includegraphics[width=\linewidth]{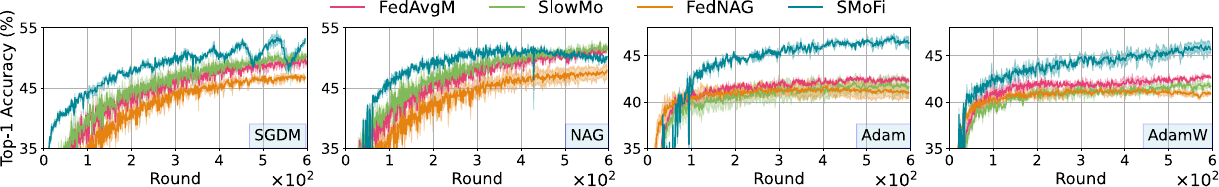}
    \vspace{-6mm}
    \caption{Learning curves of \textit{FedAvg} integrated with \SysName and its counterparts on the CIFAR100 using ResNet-18 under $\mathrm{Dir}_{100}(0.2)$ distribution. From left to right, we investigate different optimizers: SGD with momentum (SGDM), Nesterov Accelerated Gradient (NAG), Adaptive Moment Estimation (Adam), and Adam with decoupled weight decay (AdamW).}
    \vspace{-2mm}
    \label{fig:exp_various_optimizers}
\end{figure*}

\noindent{\textbf{Datasets and Models.}} 
We experiment with three widely used image benchmarks: CIFAR10, CIFAR100~\citep{krizhevsky2009learning}, and Tiny-ImageNet~\citep{le2015tiny}. Additionally, Appendix~\ref{appendix_sub_benchmark} reports evaluation on a language dataset, Shakespeare~\citep{caldas2018leaf} and a speech recognition dataset, Google Speech~\citep{warden2018speech}. We implement the commonly used task model for each dataset: ResNet-18~\citep{he2016deep} for CIFAR10 and CIFAR100; ResNet-34 for Tiny-ImageNet; a stacked transformer model~\citep{vaswani2017attention} for Shakespeare; and VGG-11~\citep{simonyan2014very} for Google Speech. To further validate the robustness of \SysName in different task models, we also explore various models including VGG, MobileNetV2~\citep{sandler2018mobilenetv2}, and DenseNet~\citep{huang2017densely}. In the $n$-th communication round, the server randomly selects 20\% clients $\mathcal{J}^{n}\subseteq\mathcal{J}$ for participation. 

\noindent{\textbf{Heterogeneous Clients Setup.}} 
We simulate \textit{data heterogeneity} in line with previous work \citep{hsu2019measuring, li2022federated}, where the $j$-th client possesses data in the distribution $\boldsymbol{q_{j}} \in \mathbb{R}^{c}$ ($c$ is the number of classes). We sample $\boldsymbol{q_{j}}$ from a Dirichlet distribution $\mathrm{Dir}_{\lvert\mathcal{J}\lvert}(\gamma)$ with a tunable concentration parameter $\gamma>0$ that controls the level of heterogeneity. A smaller $\gamma$ indicates a more heterogeneous distribution setting. Specifically, we set $\mathrm{Dir}_{100}(0.2)$ for CIFAR10 and CIFAR100, and $\mathrm{Dir}_{200}(0.2)$ for Tiny-ImageNet. We also simulate the \textit{system heterogeneity} by varying computing power and communication bandwidth across clients, to assess the wall-clock time efficiency of various split FL frameworks, as detailed in Appendix~\ref{appendix_sub_system_heterogeneity}.

\noindent{\textbf{Baselines.}} 
The baselines chosen for comparison are categorized into momentum-based methods and split FL methods. We compare \SysName against momentum-based counterparts on three baselines: 1) the vanilla FL framework \textit{FedAvg} \citep{mcmahan2017}; 2) \textit{FedProx}~\citep{li2020federated}, adding a proximal term into the local objective function; and 3) \textit{FedNAR}~\citep{li2023fednar} with self-adjusted weight decay. Building upon each baseline, we integrate three momentum-based methods including: 1) \textit{FedAvgM}~\citep{hsu2019measuring}, applying server momentum during global updates; 2) \textit{SlowMo}~\citep{wang2019slowmo}, periodically synchronizing and updating the local momentum across clients; and 3) \textit{FedNAG}~\citep{yang2022federated}, implementing Nesterov Accelerated Gradient (NAG)~\citep{sutskever2013importance, bengio2013advances} for local training with periodic momentum aggregation. Additionally, we include three split FL frameworks for evaluation: SFLV1, SFLV2~\citep{thapa2022splitfed}, and MergeSFL~\citep{liao2024mergesfl}. For SFLV1, we investigate two server-side aggregation frequencies: after every training step ($\bar{\tau}=1$) and after each local epoch ($\bar{\tau}=E$). Note that the performance of SFLV1 is equivalent to FedAvg when aggregating surrogate server-side models at each communication round.

For \SysName, the staleness factor $\alpha$ is fixed at -0.1 across all settings, while $\beta_g$ varies by task: 0.3 for CIFAR10, and 0.5 for CIFAR100 and Tiny-ImageNet. To ensure a fair comparison, we also fine-tune the hyperparameters for all baselines and counterpart methods, with further details provided in Appendix~\ref{appendix_sub_hyperparameters}.

\noindent{\textbf{Metrics.}} 
We run all the methods under each setup three times and report the average \emph{Top-1 accuracy} within 1000, 600, and 400 communication rounds for CIFAR10, CIFAR100, and Tiny-ImageNet, respectively. To evaluate the convergence speed, we calculate the \emph{round-to-accuracy} (\textbf{R}) performance, defined as the number of communication rounds required for the global model to reach the target accuracy--90\% of the best performance achieved by FedAvg--across all settings. We also report the \emph{time-to-accuracy} performance (\textbf{T}) from a system efficiency perspective when evaluating the split FL frameworks.

\subsection{Performance Evaluation}
\label{subsec:exp_eva}

\noindent{\textbf{\SysName Effectiveness.}} \SysName can be easily integrated into other split FL frameworks as a plug-in approach. \SysName improves baseline performance by: 1) speeding up the global model convergence, thereby reducing the overall latency, and 2) further improving the global model performance. 

Table~\ref{tab:exp_result_momentum} compares original baselines (i.e., FedAvg, FedProx, and FedNAR) and the baselines combined with \SysName and momentum-based counterpart methods (denoted with \textbf{+}) on three datasets. In each setup, the full model in \SysName is split at the shallow layers, with the server-side model holding the majority of the task model and the client-side model restricted to the bottom few layers. For instance, in the CIFAR10 task with ResNet-18, we fix the cut layer at $L=2$ for all participating clients, where each client trains a small portion of the model comprising the input block and two residual blocks, while the server-side model includes the remaining 6 residual blocks and the output block. Such a model splitting strategy aligns with practical SFL deployment, where the server typically has significantly greater computational resources than edge devices (i.e., clients), allowing more training tasks to be allocated to the central server for better training efficiency gains.

Experimental results show that \SysName consistently improves the performance of baselines across all benchmarks. The improvements are bigger in the complex classification tasks, particularly in complex tasks such as Tiny-ImageNet (200 classes) using the ResNet-34 model, making it highly suitable for scenarios favoring split training over conventional FL~\citep{singh2019detailed}. Compared to the three momentum-based counterparts, \SysName guarantees objective consistency in higher frequency by step-wise momentum alignment during model training, thereby further improving the accuracy of the global model. Moreover, \SysName speeds up the convergence of the global model to the target accuracy by a large margin compared to baselines integrated with the counterpart methods. Unlike FedProx, which requires clients to report the local weight information--in addition to the activations of the cut layer--to the server when applied in a split training framework, \SysName maintains the same level of privacy guarantee as SplitFed~\citep{thapa2022splitfed} in terms of its client-side transparency without extra data reporting.

Table~\ref{tab:exp_result_SFL} shows the results for split FL frameworks with a constant cut layer at $L=2$ for fair comparison. Performance for SFLV1 is sensitive to the aggregation frequency at the server side: step-wise aggregation underperforms, failing to achieve the target accuracy within the given communication rounds, whereas epoch-wise aggregation yields better performance. SFLV2 outperforms SFLV1 in most cases, consistent with findings in work~\citep{han2024convergence}, albeit at the cost of increased latency due to the sequential interaction between the server and clients. The MergeSFL offers fast convergence speed in terms of temporal space, even though it requires more communication rounds compared with our \SysName, due to the adaptive batch size depending on device capabilities. However, \SysName consistently provides superior performance in the long run.

\begin{figure}[t]
    \centering
    \includegraphics[width=\linewidth]{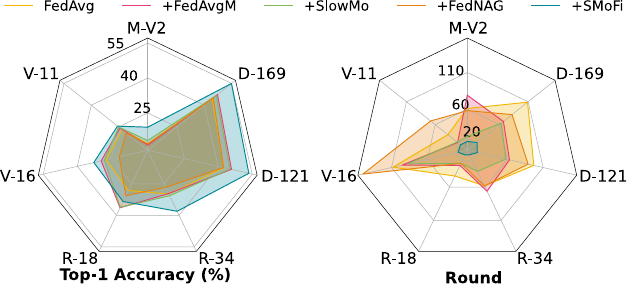}
    \vspace{-5mm}
    \caption{Robustness analysis on the Tiny-ImageNet dataset under $\mathrm{Dir}_{200}(0.2)$ distribution with various task-specific models: VGG (V), MobileNet (M), ResNet (R), and DenseNet (D). We report the best global model performance (left) and round-to-accuracy performance (right), within 150 communication rounds.}
    \label{fig:exp_various_models}
    \vspace{-7mm}
\end{figure}

\noindent{\textbf{Sensitivity Study.}} In Figure~\ref{fig:exp_sensitivity_study}, we investigate the sensitivity of \SysName to the staleness factor $\alpha$ (ranging from $-0.01$ to $-10$) and the cut layer $L$ across the 8 residual blocks of ResNet-18. From the results, a smaller $\alpha$, such as $\alpha=-1$, assigns lower weights to historical momentums during alignment, allowing the global model to converge faster, but can lead to suboptimal model performance. To trade off the model accuracy and convergence, we take $\alpha=-0.1$ as the default setting for all experiments on \SysName. Moreover, performance gains from \SysName are consistent across all cut layers $L$, with more significant benefits when the model is split at shallower layers (i.e., a smaller $L$). This aligns with typical split FL deployments, where a powerful server holds the majority of the model training task. Further sensitivity analysis on additional benchmarks and the ablation study are provided in Appendix~\ref{appendix_sub_sensitivity} and~\ref{appendix_sub_ablation_study}, respectively.

\noindent{\textbf{Robustness Analysis.}} We evaluate the robustness of momentum alignment in \SysName from two perspectives: performance across different optimizers and model architectures. In Figure~\ref{fig:exp_various_optimizers}, \SysName not only converges faster and yields higher accuracy than its counterparts under the SGDM optimizer (i.e., as used in Table~\ref{tab:exp_result_momentum} and Table~\ref{tab:exp_result_SFL}), but also consistently outperforms them when using NAG~\citep{sutskever2013importance}, Adam~\citep{kingma2014adam}, and AdamW~\citep{loshchilov2017decoupled}. Note that, for local optimizers like Adam or AdamW, we periodically align both the first- and second-moment estimates on the server side. Figure~\ref{fig:exp_various_models} further investigates the robustness of \SysName on Tiny-ImageNet across various model architectures. We report both the best global accuracy within 150 communication rounds and the round-to-accuracy performance, where the target accuracy is defined as the performance of FedAvg using the corresponding task model. Results show that \SysName consistently improves both accuracy and convergence across all 7 types of models. The benefits of momentum alignment are more obvious for deeper or more complex model architectures, making \SysName particularly suitable for split training scenarios involving resource-constrained clients and a powerful central server.
\section{Related Work}
\label{sec_related_work}

\noindent{\textbf{Split Federated Learning.} The concept of split~learning was first introduced in works~\citep{gupta2018distributed, vepakomma2018split} to split neural layers into two parts and assign them to the devices (with data resources) and the server (with supercomputing resources). SplitFed~\citep{thapa2022splitfed} takes this a step further by integrating it into the FL framework 
Current research in split FL primarily aims to address two key challenges: reducing training latency and mitigating the risk of privacy leakage. CPSL~\citep{wu2023split} first partitions devices into several clusters. Training across the clusters follows the same sequential way as SL, while training devices within the cluster in parallel; FedGKT~\citep{he2020group} deploys a compact CNN (composing a lightweight feature extractor and a classifier) on device and the majority of the large model on the server; Works ~\citep{vepakomma2019reducing, abuadbba2020can, pasquini2021unleashing} focus on reducing the privacy leakage in split learning and defending against adversarial attacks.

\noindent{\textbf{Data Heterogeneity.}} Existing methods for addressing the challenges of non-IID data silos can be broadly categorized into three types: 1) loss function modification such as works~\citep{li2021fedrs, gao2022feddc, li2020federated, li2021model} reducing the inconsistency across clients by adding the penalty term into the local objective; 2) robustness aggregation by re-weighting the local updates~\citep{wang2020tackling}, alternatively, taking aggregation as optimization problem and applying various optimizer
~\citep{reddi2020adaptive}; 3) adaptive hyperparameter setting, for instance, learning rate and weight decay of each local SGD solver~\citep{li2023fednar}, or selection rate in each communication round~\citep{balakrishnan2022diverse}. Among these approaches, works~\citep{hsu2019measuring, wang2019slowmo, yang2022federated} apply momentum-based updating in either local training or central aggregation.


\vspace{-2mm}
\section{Conclusion}
\label{sec_conclusion}
\vspace{-1mm}

In this paper, we revisit the use of momentum to improve the performance of split FL on non-IID data silos. We propose \SysName, a simple yet effective split FL framework that aligns the momentum of server-side solvers at each learning step. By leveraging the inherent client-server interaction in split FL, \SysName imposes gradient-based constraints to mitigate training divergence. Experimental results show that \SysName significantly improves both convergence speed and accuracy of the global model. Moreover, \SysName requires zero modifications on clients, making it fully client-transparent--without additional communication overhead or privacy risk--thus offering a practical solution for real-world deployment.

\section{Acknowledgments}
This work is partly funded by the EU's Horizon Europe HarmonicAI project under the HORIZON-MSCA-2022-SE-01 scheme with grant agreement number 101131117. This work is also supported by the ICAI GENIUS lab of the research program ROBUST (project number KICH3.LTP.20.006), partly funded by the Dutch Research Council (NWO). We also thank the support provided by Samenwerkende Universitaire RekenFaciliteiten (SURF) with their Snellius infrastructure.

\bibliography{reference}

\begin{thebibliography}{60}
\providecommand{\natexlab}[1]{#1}

\bibitem[{Abuadbba et~al.(2020)Abuadbba, Kim, Kim, Thapa, Camtepe, Gao, Kim, and Nepal}]{abuadbba2020can}
Abuadbba, S.; Kim, K.; Kim, M.; Thapa, C.; Camtepe, S.~A.; Gao, Y.; Kim, H.; and Nepal, S. 2020.
\newblock Can we use split learning on 1d cnn models for privacy preserving training?
\newblock In \emph{Proceedings of the 15th ACM Asia Conference on Computer and Communications Security}, 305--318.

\bibitem[{Acar et~al.(2021)Acar, Zhao, Matas, Mattina, Whatmough, and Saligrama}]{acarfederated}
Acar, D. A.~E.; Zhao, Y.; Matas, R.; Mattina, M.; Whatmough, P.; and Saligrama, V. 2021.
\newblock Federated Learning Based on Dynamic Regularization.
\newblock In \emph{International Conference on Learning Representations}.

\bibitem[{Balakrishnan et~al.(2022)Balakrishnan, Li, Zhou, Himayat, Smith, and Bilmes}]{balakrishnan2022diverse}
Balakrishnan, R.; Li, T.; Zhou, T.; Himayat, N.; Smith, V.; and Bilmes, J. 2022.
\newblock Diverse client selection for federated learning via submodular maximization.
\newblock In \emph{International Conference on Learning Representations}.

\bibitem[{Bengio, Boulanger-Lewandowski, and Pascanu(2013)}]{bengio2013advances}
Bengio, Y.; Boulanger-Lewandowski, N.; and Pascanu, R. 2013.
\newblock Advances in optimizing recurrent networks.
\newblock In \emph{2013 IEEE international conference on acoustics, speech and signal processing}, 8624--8628. IEEE.

\bibitem[{Bonawitz et~al.(2019)Bonawitz, Eichner, Grieskamp, Huba, Ingerman, Ivanov, Kiddon, Kone{\v{c}}n{\`y}, Mazzocchi, McMahan et~al.}]{bonawitz2019towards}
Bonawitz, K.; Eichner, H.; Grieskamp, W.; Huba, D.; Ingerman, A.; Ivanov, V.; Kiddon, C.; Kone{\v{c}}n{\`y}, J.; Mazzocchi, S.; McMahan, B.; et~al. 2019.
\newblock Towards federated learning at scale: System design.
\newblock \emph{Proceedings of machine learning and systems}, 1: 374--388.

\bibitem[{Caldas et~al.(2018)Caldas, Duddu, Wu, Li, Kone{\v{c}}n{\`y}, McMahan, Smith, and Talwalkar}]{caldas2018leaf}
Caldas, S.; Duddu, S. M.~K.; Wu, P.; Li, T.; Kone{\v{c}}n{\`y}, J.; McMahan, H.~B.; Smith, V.; and Talwalkar, A. 2018.
\newblock Leaf: A benchmark for federated settings.
\newblock \emph{arXiv preprint arXiv:1812.01097}.

\bibitem[{Eshratifar, Abrishami, and Pedram(2019)}]{eshratifar2019jointdnn}
Eshratifar, A.~E.; Abrishami, M.~S.; and Pedram, M. 2019.
\newblock JointDNN: An efficient training and inference engine for intelligent mobile cloud computing services.
\newblock \emph{IEEE Transactions on Mobile Computing}, 20(2): 565--576.

\bibitem[{Fortino and Trunfio(2014)}]{fortino2014internet}
Fortino, G.; and Trunfio, P. 2014.
\newblock \emph{Internet of things based on smart objects: Technology, middleware and applications}.
\newblock Springer.

\bibitem[{Gao et~al.(2022)Gao, Fu, Li, Chen, Xu, and Xu}]{gao2022feddc}
Gao, L.; Fu, H.; Li, L.; Chen, Y.; Xu, M.; and Xu, C.-Z. 2022.
\newblock Feddc: Federated learning with non-iid data via local drift decoupling and correction.
\newblock In \emph{Proceedings of the IEEE/CVF conference on computer vision and pattern recognition}, 10112--10121.

\bibitem[{Gao et~al.(2020)Gao, Kim, Abuadbba, Kim, Thapa, Kim, Camtepe, Kim, and Nepal}]{gao2020end}
Gao, Y.; Kim, M.; Abuadbba, S.; Kim, Y.; Thapa, C.; Kim, K.; Camtepe, S.~A.; Kim, H.; and Nepal, S. 2020.
\newblock End-to-end evaluation of federated learning and split learning for internet of things.
\newblock \emph{arXiv preprint arXiv:2003.13376}.

\bibitem[{Gubbi et~al.(2013)Gubbi, Buyya, Marusic, and Palaniswami}]{gubbi2013internet}
Gubbi, J.; Buyya, R.; Marusic, S.; and Palaniswami, M. 2013.
\newblock Internet of Things (IoT): A vision, architectural elements, and future directions.
\newblock \emph{Future generation computer systems}, 29(7): 1645--1660.

\bibitem[{Gupta and Raskar(2018)}]{gupta2018distributed}
Gupta, O.; and Raskar, R. 2018.
\newblock Distributed learning of deep neural network over multiple agents.
\newblock \emph{Journal of Network and Computer Applications}, 116: 1--8.

\bibitem[{Han et~al.(2024)Han, Huang, Tian, Tang, and Liu}]{han2024convergence}
Han, P.; Huang, C.; Tian, G.; Tang, M.; and Liu, X. 2024.
\newblock Convergence analysis of split federated learning on heterogeneous data.
\newblock \emph{arXiv preprint arXiv:2402.15166}.

\bibitem[{He, Annavaram, and Avestimehr(2020)}]{he2020group}
He, C.; Annavaram, M.; and Avestimehr, S. 2020.
\newblock Group knowledge transfer: Federated learning of large cnns at the edge.
\newblock \emph{Advances in Neural Information Processing Systems}, 33: 14068--14080.

\bibitem[{He et~al.(2016)He, Zhang, Ren, and Sun}]{he2016deep}
He, K.; Zhang, X.; Ren, S.; and Sun, J. 2016.
\newblock Deep residual learning for image recognition.
\newblock In \emph{Proceedings of the IEEE conference on computer vision and pattern recognition}, 770--778.

\bibitem[{Hsu, Qi, and Brown(2019)}]{hsu2019measuring}
Hsu, T.-M.~H.; Qi, H.; and Brown, M. 2019.
\newblock Measuring the effects of non-identical data distribution for federated visual classification.
\newblock \emph{arXiv preprint arXiv:1909.06335}.

\bibitem[{Huang et~al.(2017)Huang, Liu, Van Der~Maaten, and Weinberger}]{huang2017densely}
Huang, G.; Liu, Z.; Van Der~Maaten, L.; and Weinberger, K.~Q. 2017.
\newblock Densely connected convolutional networks.
\newblock In \emph{Proceedings of the IEEE conference on computer vision and pattern recognition}, 4700--4708.

\bibitem[{Huang et~al.(2011)Huang, Chen, Pei, Wang, Qian, Qian, Tiwana, Xu, Mao, Zhang et~al.}]{huang2011mobiperf}
Huang, J.; Chen, C.; Pei, Y.; Wang, Z.; Qian, Z.; Qian, F.; Tiwana, B.; Xu, Q.; Mao, Z.; Zhang, M.; et~al. 2011.
\newblock Mobiperf: Mobile network measurement system.
\newblock \emph{Technical Report. University of Michigan and Microsoft Research}.

\bibitem[{Ignatov et~al.(2019)Ignatov, Timofte, Kulik, Yang, Wang, Baum, Wu, Xu, and Van~Gool}]{ignatov2019ai}
Ignatov, A.; Timofte, R.; Kulik, A.; Yang, S.; Wang, K.; Baum, F.; Wu, M.; Xu, L.; and Van~Gool, L. 2019.
\newblock Ai benchmark: All about deep learning on smartphones in 2019.
\newblock In \emph{2019 IEEE/CVF International Conference on Computer Vision Workshop (ICCVW)}, 3617--3635. IEEE.

\bibitem[{Karimireddy et~al.(2020)Karimireddy, Kale, Mohri, Reddi, Stich, and Suresh}]{karimireddy2020scaffold}
Karimireddy, S.~P.; Kale, S.; Mohri, M.; Reddi, S.; Stich, S.; and Suresh, A.~T. 2020.
\newblock Scaffold: Stochastic controlled averaging for federated learning.
\newblock In \emph{International conference on machine learning}, 5132--5143. PMLR.

\bibitem[{Kingma(2014)}]{kingma2014adam}
Kingma, D.~P. 2014.
\newblock Adam: A method for stochastic optimization.
\newblock \emph{arXiv preprint arXiv:1412.6980}.

\bibitem[{Kone{\v{c}}n{\`y} et~al.(2016)Kone{\v{c}}n{\`y}, McMahan, Yu, Richt{\'a}rik, Suresh, and Bacon}]{konevcny2016federated}
Kone{\v{c}}n{\`y}, J.; McMahan, H.~B.; Yu, F.~X.; Richt{\'a}rik, P.; Suresh, A.~T.; and Bacon, D. 2016.
\newblock Federated learning: Strategies for improving communication efficiency.
\newblock \emph{arXiv preprint arXiv:1610.05492}.

\bibitem[{Krizhevsky, Hinton et~al.(2009)}]{krizhevsky2009learning}
Krizhevsky, A.; Hinton, G.; et~al. 2009.
\newblock Learning multiple layers of features from tiny images.

\bibitem[{Lai et~al.(2022)Lai, Dai, Singapuram, Liu, Zhu, Madhyastha, and Chowdhury}]{lai2022fedscale}
Lai, F.; Dai, Y.; Singapuram, S.; Liu, J.; Zhu, X.; Madhyastha, H.; and Chowdhury, M. 2022.
\newblock Fedscale: Benchmarking model and system performance of federated learning at scale.
\newblock In \emph{International Conference on Machine Learning}, 11814--11827. PMLR.

\bibitem[{Le and Yang(2015)}]{le2015tiny}
Le, Y.; and Yang, X. 2015.
\newblock Tiny imagenet visual recognition challenge.
\newblock \emph{CS 231N}, 7(7): 3.

\bibitem[{Li et~al.(2018)Li, Gao, Lv, and Lu}]{li2018deep}
Li, J.; Gao, H.; Lv, T.; and Lu, Y. 2018.
\newblock Deep reinforcement learning based computation offloading and resource allocation for MEC.
\newblock In \emph{2018 IEEE wireless communications and networking conference (WCNC)}, 1--6. IEEE.

\bibitem[{Li et~al.(2023)Li, Li, Tian, Ho, Xing, and Wang}]{li2023fednar}
Li, J.; Li, A.; Tian, C.; Ho, Q.; Xing, E.; and Wang, H. 2023.
\newblock FedNAR: federated optimization with normalized annealing regularization.
\newblock \emph{Advances in Neural Information Processing Systems}, 36: 74753--74763.

\bibitem[{Li et~al.(2022)Li, Diao, Chen, and He}]{li2022federated}
Li, Q.; Diao, Y.; Chen, Q.; and He, B. 2022.
\newblock Federated learning on non-iid data silos: An experimental study.
\newblock In \emph{2022 IEEE 38th international conference on data engineering (ICDE)}, 965--978. IEEE.

\bibitem[{Li, He, and Song(2021)}]{li2021model}
Li, Q.; He, B.; and Song, D. 2021.
\newblock Model-contrastive federated learning.
\newblock In \emph{Proceedings of the IEEE/CVF conference on computer vision and pattern recognition}, 10713--10722.

\bibitem[{Li et~al.(2020)Li, Sahu, Zaheer, Sanjabi, Talwalkar, and Smith}]{li2020federated}
Li, T.; Sahu, A.~K.; Zaheer, M.; Sanjabi, M.; Talwalkar, A.; and Smith, V. 2020.
\newblock Federated optimization in heterogeneous networks.
\newblock \emph{Proceedings of Machine learning and systems}, 2: 429--450.

\bibitem[{Li and Zhan(2021)}]{li2021fedrs}
Li, X.-C.; and Zhan, D.-C. 2021.
\newblock Fedrs: Federated learning with restricted softmax for label distribution non-iid data.
\newblock In \emph{Proceedings of the 27th ACM SIGKDD conference on knowledge discovery \& data mining}, 995--1005.

\bibitem[{Liao et~al.(2024)Liao, Xu, Xu, Wang, Yao, and Qiao}]{liao2024mergesfl}
Liao, Y.; Xu, Y.; Xu, H.; Wang, L.; Yao, Z.; and Qiao, C. 2024.
\newblock Mergesfl: Split federated learning with feature merging and batch size regulation.
\newblock In \emph{2024 IEEE 40th International Conference on Data Engineering (ICDE)}, 2054--2067. IEEE.

\bibitem[{Liu, Gao, and Yin(2020)}]{liu2020improved}
Liu, Y.; Gao, Y.; and Yin, W. 2020.
\newblock An improved analysis of stochastic gradient descent with momentum.
\newblock \emph{Advances in Neural Information Processing Systems}, 33: 18261--18271.

\bibitem[{Loshchilov(2017)}]{loshchilov2017decoupled}
Loshchilov, I. 2017.
\newblock Decoupled weight decay regularization.
\newblock \emph{arXiv preprint arXiv:1711.05101}.

\bibitem[{McMahan et~al.(2017{\natexlab{a}})McMahan, Moore, Ramage, Hampson, and y~Arcas}]{mcmahan2017communication}
McMahan, B.; Moore, E.; Ramage, D.; Hampson, S.; and y~Arcas, B.~A. 2017{\natexlab{a}}.
\newblock Communication-efficient learning of deep networks from decentralized data.
\newblock In \emph{Artificial intelligence and statistics}. PMLR.

\bibitem[{McMahan et~al.(2017{\natexlab{b}})McMahan, Moore, Ramage, Hampson, and y~Arcas}]{mcmahan2017}
McMahan, B.; Moore, E.; Ramage, D.; Hampson, S.; and y~Arcas, B.~A. 2017{\natexlab{b}}.
\newblock Communication-efficient learning of deep networks from decentralized data.
\newblock In \emph{Artificial intelligence and statistics}, volume~54, 1273--1282. PMLR.

\bibitem[{Pasquini, Ateniese, and Bernaschi(2021)}]{pasquini2021unleashing}
Pasquini, D.; Ateniese, G.; and Bernaschi, M. 2021.
\newblock Unleashing the tiger: Inference attacks on split learning.
\newblock In \emph{Proceedings of the 2021 ACM SIGSAC Conference on Computer and Communications Security}, 2113--2129.

\bibitem[{Polyak(1964)}]{polyak1964some}
Polyak, B.~T. 1964.
\newblock Some methods of speeding up the convergence of iteration methods.
\newblock \emph{Ussr computational mathematics and mathematical physics}, 4(5): 1--17.

\bibitem[{Reddi et~al.(2020)Reddi, Charles, Zaheer, Garrett, Rush, Kone{\v{c}}n{\`y}, Kumar, and McMahan}]{reddi2020adaptive}
Reddi, S.; Charles, Z.; Zaheer, M.; Garrett, Z.; Rush, K.; Kone{\v{c}}n{\`y}, J.; Kumar, S.; and McMahan, H.~B. 2020.
\newblock Adaptive federated optimization.
\newblock \emph{arXiv preprint arXiv:2003.00295}.

\bibitem[{Rodio et~al.(2023)Rodio, Faticanti, Marfoq, Neglia, and Leonardi}]{rodio2023federated}
Rodio, A.; Faticanti, F.; Marfoq, O.; Neglia, G.; and Leonardi, E. 2023.
\newblock Federated learning under heterogeneous and correlated client availability.
\newblock In \emph{IEEE INFOCOM 2023-IEEE Conference on Computer Communications}, 1--10. IEEE.

\bibitem[{Sandler et~al.(2018)Sandler, Howard, Zhu, Zhmoginov, and Chen}]{sandler2018mobilenetv2}
Sandler, M.; Howard, A.; Zhu, M.; Zhmoginov, A.; and Chen, L.-C. 2018.
\newblock Mobilenetv2: Inverted residuals and linear bottlenecks.
\newblock In \emph{Proceedings of the IEEE conference on computer vision and pattern recognition}, 4510--4520.

\bibitem[{Shi et~al.(2025)Shi, Li, Zhao, Guo, and Chang}]{shi2025fedlws}
Shi, C.; Li, J.; Zhao, H.; Guo, D.; and Chang, Y. 2025.
\newblock FedLWS: Federated Learning with Adaptive Layer-wise Weight Shrinking.
\newblock \emph{arXiv preprint arXiv:2503.15111}.

\bibitem[{Simonyan and Zisserman(2014)}]{simonyan2014very}
Simonyan, K.; and Zisserman, A. 2014.
\newblock Very deep convolutional networks for large-scale image recognition.
\newblock \emph{arXiv preprint arXiv:1409.1556}.

\bibitem[{Singh et~al.(2019)Singh, Vepakomma, Gupta, and Raskar}]{singh2019detailed}
Singh, A.; Vepakomma, P.; Gupta, O.; and Raskar, R. 2019.
\newblock Detailed comparison of communication efficiency of split learning and federated learning.
\newblock \emph{arXiv preprint arXiv:1909.09145}.

\bibitem[{Sutskever et~al.(2013)Sutskever, Martens, Dahl, and Hinton}]{sutskever2013importance}
Sutskever, I.; Martens, J.; Dahl, G.; and Hinton, G. 2013.
\newblock On the importance of initialization and momentum in deep learning.
\newblock In \emph{International conference on machine learning}, 1139--1147. PMLR.

\bibitem[{Thapa et~al.(2022)Thapa, Arachchige, Camtepe, and Sun}]{thapa2022splitfed}
Thapa, C.; Arachchige, P. C.~M.; Camtepe, S.; and Sun, L. 2022.
\newblock Splitfed: When federated learning meets split learning.
\newblock In \emph{Proceedings of the AAAI Conference on Artificial Intelligence}, volume~36, 8485--8493.

\bibitem[{Vaswani(2017)}]{vaswani2017attention}
Vaswani, A. 2017.
\newblock Attention is all you need.
\newblock \emph{Advances in Neural Information Processing Systems}.

\bibitem[{Vepakomma et~al.(2019)Vepakomma, Gupta, Dubey, and Raskar}]{vepakomma2019reducing}
Vepakomma, P.; Gupta, O.; Dubey, A.; and Raskar, R. 2019.
\newblock Reducing leakage in distributed deep learning for sensitive health data.
\newblock \emph{arXiv preprint arXiv:1812.00564}, 2.

\bibitem[{Vepakomma et~al.(2018{\natexlab{a}})Vepakomma, Gupta, Swedish, and Raskar}]{vepakomma2018split}
Vepakomma, P.; Gupta, O.; Swedish, T.; and Raskar, R. 2018{\natexlab{a}}.
\newblock Split learning for health: Distributed deep learning without sharing raw patient data.
\newblock \emph{arXiv preprint arXiv:1812.00564}.

\bibitem[{Vepakomma et~al.(2018{\natexlab{b}})Vepakomma, Swedish, Raskar, Gupta, and Dubey}]{vepakomma2018no}
Vepakomma, P.; Swedish, T.; Raskar, R.; Gupta, O.; and Dubey, A. 2018{\natexlab{b}}.
\newblock No peek: A survey of private distributed deep learning.
\newblock \emph{arXiv preprint arXiv:1812.03288}.

\bibitem[{Verbraeken et~al.(2020)Verbraeken, Wolting, Katzy, Kloppenburg, Verbelen, and Rellermeyer}]{verbraeken2020survey}
Verbraeken, J.; Wolting, M.; Katzy, J.; Kloppenburg, J.; Verbelen, T.; and Rellermeyer, J.~S. 2020.
\newblock A survey on distributed machine learning.
\newblock \emph{Acm computing surveys (csur)}, 53(2): 1--33.

\bibitem[{Wang et~al.(2020{\natexlab{a}})Wang, Liu, Liang, Joshi, and Poor}]{wang2020tackling}
Wang, J.; Liu, Q.; Liang, H.; Joshi, G.; and Poor, H.~V. 2020{\natexlab{a}}.
\newblock Tackling the objective inconsistency problem in heterogeneous federated optimization.
\newblock \emph{Advances in neural information processing systems}, 33: 7611--7623.

\bibitem[{Wang et~al.(2019{\natexlab{a}})Wang, Tantia, Ballas, and Rabbat}]{wangslowmo}
Wang, J.; Tantia, V.; Ballas, N.; and Rabbat, M. 2019{\natexlab{a}}.
\newblock SlowMo: Improving Communication-Efficient Distributed SGD with Slow Momentum.
\newblock In \emph{International Conference on Learning Representations}.

\bibitem[{Wang et~al.(2019{\natexlab{b}})Wang, Tantia, Ballas, and Rabbat}]{wang2019slowmo}
Wang, J.; Tantia, V.; Ballas, N.; and Rabbat, M. 2019{\natexlab{b}}.
\newblock Slowmo: Improving communication-efficient distributed sgd with slow momentum.
\newblock \emph{arXiv preprint arXiv:1910.00643}.

\bibitem[{Wang et~al.(2021)Wang, Zhang, Uchiyama, and Matsuda}]{wang2021hivemind}
Wang, S.; Zhang, X.; Uchiyama, H.; and Matsuda, H. 2021.
\newblock HiveMind: Towards cellular native machine learning model splitting.
\newblock \emph{IEEE Journal on Selected Areas in Communications}, 40(2): 626--640.

\bibitem[{Wang et~al.(2020{\natexlab{b}})Wang, Han, Leung, Niyato, Yan, and Chen}]{wang2020convergence}
Wang, X.; Han, Y.; Leung, V.~C.; Niyato, D.; Yan, X.; and Chen, X. 2020{\natexlab{b}}.
\newblock Convergence of edge computing and deep learning: A comprehensive survey.
\newblock \emph{IEEE Communications Surveys \& Tutorials}, 22(2): 869--904.

\bibitem[{Warden(2018)}]{warden2018speech}
Warden, P. 2018.
\newblock Speech commands: A dataset for limited-vocabulary speech recognition.
\newblock \emph{arXiv preprint arXiv:1804.03209}.

\bibitem[{Wu et~al.(2023)Wu, Li, Qu, Zhou, Shen, Zhuang, Li, and Shi}]{wu2023split}
Wu, W.; Li, M.; Qu, K.; Zhou, C.; Shen, X.; Zhuang, W.; Li, X.; and Shi, W. 2023.
\newblock Split learning over wireless networks: Parallel design and resource management.
\newblock \emph{IEEE Journal on Selected Areas in Communications}, 41(4): 1051--1066.

\bibitem[{Xie, Koyejo, and Gupta(2019)}]{xie2019asynchronous}
Xie, C.; Koyejo, S.; and Gupta, I. 2019.
\newblock Asynchronous federated optimization.
\newblock \emph{arXiv preprint arXiv:1903.03934}.

\bibitem[{Yang et~al.(2022)Yang, Bao, Yuan, Tran, and Zomaya}]{yang2022federated}
Yang, Z.; Bao, W.; Yuan, D.; Tran, N.~H.; and Zomaya, A.~Y. 2022.
\newblock Federated learning with nesterov accelerated gradient.
\newblock \emph{IEEE Transactions on Parallel and Distributed Systems}, 33(12): 4863--4873.

\end{thebibliography}

\appendix
\section{Experiments Details}
\label{appendix_exp_details}

\subsection{Datasets and Models}
\label{appendix_sub_dataset_models}

\begin{figure*}
    \centering
    \includegraphics[width=0.9\linewidth]{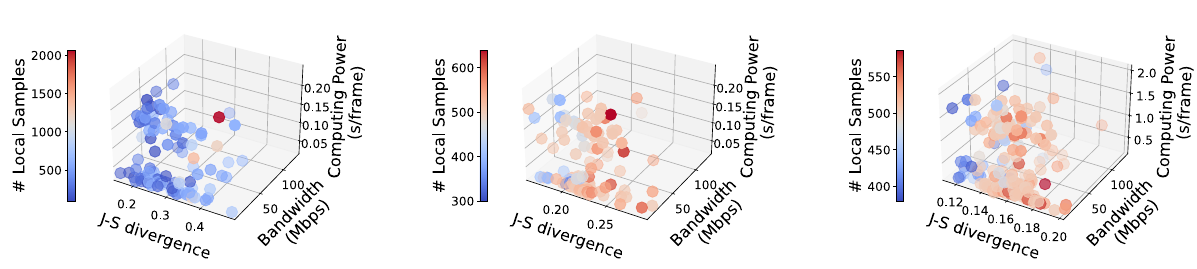}
    \vspace{-4mm}
    \caption{Visualization of the data and system heterogeneity across three benchmarks (from left to right): CIFAR10, CIFAR100, and Tiny-ImageNet. Data heterogeneity is reflected by varying local dataset sizes and distributions (quantified by J-S divergence). The system heterogeneity is simulated by endowing each client with varying levels of computing power--parameterized by model inference speed in \textit{s/frame}--and network throughput.}
    \label{fig:supp_system_heterogeneity_setup}
\end{figure*}

\begin{figure*}
    \centering
    \includegraphics[width=\linewidth]{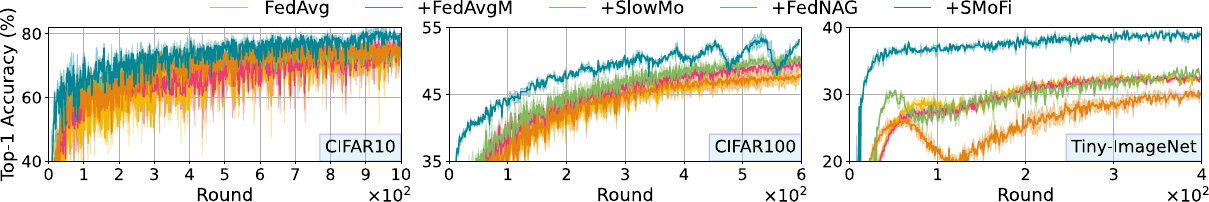}\\
    \vspace{1.5mm}
    \includegraphics[width=\linewidth]{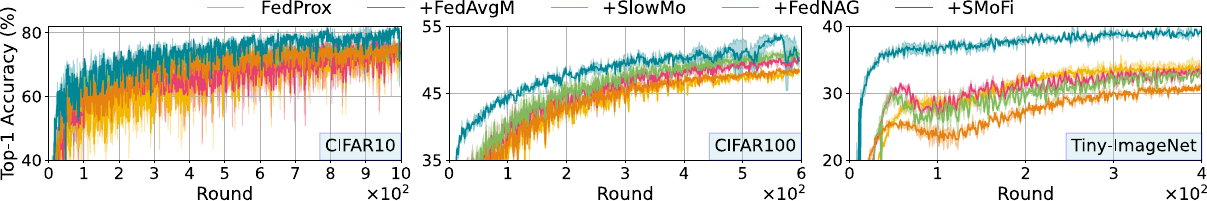}\\
    \vspace{1.5mm}
    \includegraphics[width=\linewidth]{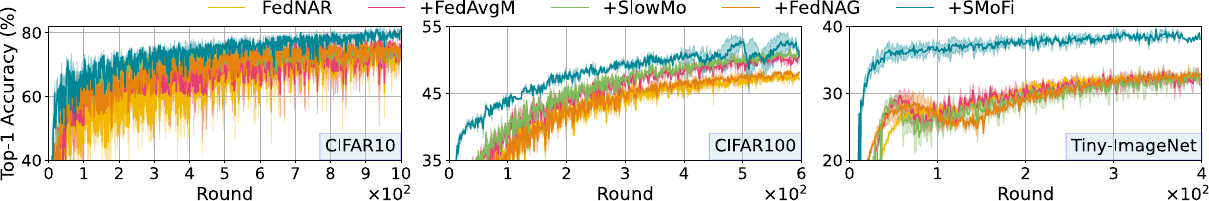}\\
    \vspace{-2mm}
    \caption{Learning curves of \SysName and its counterparts integrated into three baseline methods: FedAvg (top), FedProx (middle), and FedNAR (bottom). Each setup includes experiments on three benchmarks (from left to right): CIFAR-10 and CIFAR-100 with ResNet-18 under $\mathrm{Dir}_{100}(0.2)$, and Tiny-ImageNet with ResNet-34 under $\mathrm{Dir}_{200}(0.2)$.}
    \label{fig:supp_learning_curve_baselines}
\end{figure*}

\begin{figure*}
    \centering
    \includegraphics[width=\linewidth]{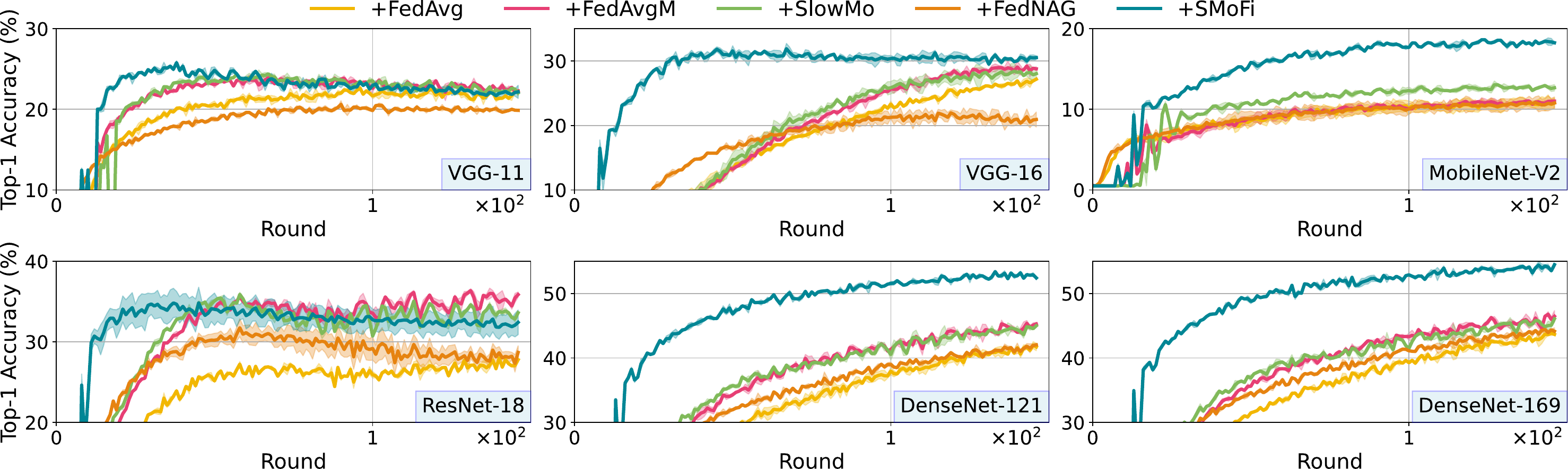}
    \vspace{-8mm}
    \caption{Learning curves of \SysName and its counterparts on Tiny-ImageNet benchmark with different task models. We evaluate the robustness of each momentum-based method when integrated into FedAvg, under the fixed training budget of 150 communication rounds. }
    \label{fig:supp_learning_curve_varying_model}
\end{figure*}

\noindent{\textbf{CIFAR10 and CIFAR100.}} 
Both datasets are used for image classification tasks, with CIFAR-10 containing 10 classes and CIFAR-100 containing 100 classes. We employ ResNet-18 as the task model and use the SGDM optimizer with a momentum of 0.9 and a mini-batch of size $B=32$. The initial learning rate is set to 0.05, decaying across rounds by a factor of 0.998, with a weight decay of 0.0005. For each dataset, we assign the 50000 training samples to 100 clients following a Dirichlet distribution with a concentration parameter of 0.2. Besides, the hyperparameters for optimizers in experiments shown in Figure~\ref{fig:exp_various_optimizers} are set as follows: the NAG optimizer shares the same settings as SGDM, while both Adam and AdamW optimizers use a learning rate of 0.001 without weight decay.

\noindent{\textbf{Tiny-ImageNet.}} It is a more complex image classification dataset with 200 classes. The task model for Tiny-ImageNet is ResNet-34, updated by the SGDM optimizer with a momentum of 0.9 and a mini-batch of size $B=64$. The initial learning rate is set to 0.05, decaying across rounds by a factor of 0.998, with a weight decay of 0.001. Similarly, we assign the 100000 training samples to 200 clients following a Dirichlet distribution with a concentration parameter $\gamma=0.2$. In Figure~\ref{fig:exp_various_models}, we validate the robustness of \SysName to different task models on the Tiny-ImageNet benchmark. The SGDM optimizer with a constant momentum of 0.9 across all task models while the learning rate and weight decay vary as follows: for VGG-11, VGG-16, and MobileNet-V2, the learning rate is 0.01 with a weight decay of 0.0005; for ResNet-18 the learning rate is 0.05 with a weight decay of 0.0005; and for DenseNet-121 and DenseNet-169, the learning rate is 0.05 with a weight decay of 0.001.

\noindent{\textbf{Shakespeare.}} It is a language dataset from the collection of \textit{The Complete Works of William Shakespeare}, which is used for the next-character prediction task with 80 classes. In line with work~\citep{li2023fednar}, we use a stacked transformer model with six attention layers as the task model backbone. The SGDM optimizer is configured with a momentum of 0.9, a learning rate of 0.01, a weight decay of 0.0005, and a mini-batch of size $B=100$. Due to the inherently non-IID nature of the Shakespeare dataset, we randomly assign 100 roles to the corresponding clients.

\noindent{\textbf{Google Speech.}} It is a speech command recognition dataset with 35 classes, consisting of common words such as ``Yes'', ``No'', ``Up'', ``Down'', ``Left'', ``Right'', ``On'', ``Off'', ``Stop'', and ``Go''. We assign a total of 94824 audio clips to 200 clients following a Dirichlet distribution, and randomly select 20 clients for training in each round. The test set contains 11005 audio clips for evaluating the global model. We use VGG-11 as the task model for speech recognition, and configure the SGDM optimizer with a momentum of 0.9, a learning rate of 0.01, a weight decay of 0.0005, and a mini-batch of size $B=32$.

\subsection{Hyperparameter Choice}
\label{appendix_sub_hyperparameters}
We run all the experiments on an NVIDIA A40 GPU. For a fair comparison, we perform the search over hyperparameters to report the best performance of each compared method across all experimental setups.

FedProx adds a proximal term into the local objective function for local training consistency. We select the optimal penalty constant $\mu_{\mathrm{prox}}$ via a grid search over $\{0.0001, 0.001, 0.01, 0.1\}$. The best $\mu_{\mathrm{prox}}$ for CIFAR10, CIFAR100, and Tiny-ImageNet are 0.01, 0.01, and 0.1, respectively.

FedAvgM introduces server momentum during global model updating. The server momentum factor $\beta_{\mathrm{avgm}}$ in FedAvgM varies by tasks: $\beta_{\mathrm{avgm}}=0.3$ for CIFAR10; $\beta_{\mathrm{avgm}}=0.5$ for CIFAR100 and Tiny-ImageNet. Experiments shown in Figure~\ref{fig:exp_various_optimizers} and Figure~\ref{fig:exp_various_models}, $\beta_{\mathrm{avgm}}$ are kept constant at 0.5.

In SlowMo, clients periodically synchronize and perform a momentum update. We fine-tune the slow learning rate $\alpha_{\mathrm{slow}}$ and momentum $\beta_{\mathrm{slow}}$ in SlowMo: $\alpha_{\mathrm{slow}}=0.5$ and $\beta_{\mathrm{slow}}=0.4$ for CIFAR10; $\alpha_{\mathrm{slow}}=1$ and $\beta_{\mathrm{slow}}=0.6$ for CIFAR100, Tiny-ImageNet. For the robustness analysis, $\alpha_{\mathrm{slow}}$ and $\beta_{\mathrm{slow}}$ are kept constant at 1 and 0.6, respectively.

FedNAG uses Nesterov Accelerated Gradient optimizer, following its original design unless otherwise specified.

Besides, we also include three split FL frameworks for evaluation: 1) SFLV1, where the server maintains multiple surrogate server-side models for each client, and updates the server-side models in parallel. The server periodically aggregates the surrogate models at a specified frequency $\bar{\tau}$. We investigate two aggregation frequencies: per local step ($\bar{\tau}=1$) and per local epoch ($\bar{\tau}=E$). When aggregation is performed at the end of each round ($\bar{\tau}=N$), SFLV1 becomes equivalent to FedAvg; 2) SFLV2, where the server maintains a single server-side model and updates the model by sequentially interacts with the clients; and 3) MergeSFL~\citep{liao2024mergesfl} where client-side batch sizes vary according to local computing and communication capabilities. The server updates the server-side model on the mixed activation sequence collected from participating clients at each step.

Moreover, for SlowMo, FedNAG, and MergeSFL, we follow the implementation settings from the original work, where all participating clients execute the same number of local steps in parallel. We specify a fixed number of local steps $T$, based on the mean value of local steps across clients, which varies by dataset: $T=75$ for CIFAR10 and CIFAR100 and $T=40$ for Tiny-ImageNet. For \SysName and other baselines, we maintain that clients execute the same number of local epochs and set it to 5 across all experiments.

\subsection{System Heterogeneity}
\label{appendix_sub_system_heterogeneity}

To investigate the training efficiency of \SysName and its counterpart SFL frameworks, we report the time-to-accuracy performance in Table~\ref{tab:exp_result_SFL}. To this end, we calculate the wall clock time in the context of both data and system heterogeneity. 

As shown in Figure~\ref{fig:supp_system_heterogeneity_setup}, the heterogeneous data silos are visualized by variations in local dataset sizes and distributions (quantified by J-S divergence). Specifically, we quantity the imbalance level of local data $\mathcal{D}_j$ by the J-S divergence between local distribution $\boldsymbol{q_{j}}$ and the balanced distribution $\boldsymbol{\tilde{q_{j}}} = [\lfloor \frac{\lvert \mathcal{D}_j \lvert}{c} \rfloor, \cdots, \lfloor \frac{\lvert \mathcal{D}_j \lvert}{c} \rfloor]$ where $c$ denotes the number of task-related classes.

The heterogeneous resources across clients are reflected in the different inference speed $p^{d}_{j}$ and communication bandwidth $b_{j}$. Specifically, to simulate system heterogeneity in real-world scenarios, client $j\in\mathcal{J}$ is endowed with the computation capability $p^{d}_{j}$ (parameterized by model inference speed in \textit{s/frame}) and network throughput $b_{j}$ (in \textit{kbps}), sampled from the public dataset AI benchmark~\citep{ignatov2019ai} and MobiPerf \citep{huang2011mobiperf}, respectively. The central server is assumed to have significantly greater computational resources, defined as $p^{s}=\frac{1}{\kappa\lvert \mathcal{J} \lvert}\sum_{j \in \mathcal{J}}p^{d}_{j}$, controlled by parameter $\kappa$. This implies that the inference speed at the server side is $\kappa\times$ faster than the average client speed, and we set $\kappa=100$ for all benchmarks. Note that the inference speed varies according to the task-specific models. 

We calculate the accumulated training time \textbf{T} over \textbf{R} rounds--as aligned with Table~\ref{tab:exp_result_momentum}--when the model reaches the target accuracy, defined as 90\% of the best performance achieved by FedAvg in all settings. The training latency for the $j$-th client collaboratively training with the central server over a mini-batch $B$ is
\begin{equation}
    t_{j} = t_{j}^{d-comp} + t_{j}^{s-comp} + t_{j}^{comm},
    \label{eq:supp_exp_1}
\end{equation}
which consists of on-device training time $t_{j,n}^{d-comp}$, on-server training time $t_{j,n}^{s-comp}$, and communication time $t_{j,n}^{comm}$ between the device and server. Each part is defined as:
\begin{equation}
    t_{j}^{d-comp} = 3 \times B \times p^{d}_{j} \times \mathcal{O}(L),
    \label{eq:supp_exp_2}
\end{equation}
\begin{equation}
    t_{j}^{s-comp} = 3 \times B \times p^{s} \times (1-\mathcal{O}(L)),
    \label{eq:supp_exp_3}
\end{equation}
\begin{equation}
    t_{j}^{comm} = 2 \times B \times \mathcal{S}(L) \times \frac{1}{b_{j}},
    \label{eq:supp_exp_4}
\end{equation}
where $B$ is the batch size. The factor $3\times$ in~\eqref{eq:supp_exp_2} and~\eqref{eq:supp_exp_3} follows the assumption that the backward pass is twice as costly as the forward pass (i.e., the model inference)~\cite{lai2022fedscale}. The $\mathcal{O}(\cdot)$ and $\mathcal{S}(\cdot)$ denote the ratio of device-side model operations and the size of transferred activations/gradients. We employ DeepSpeed library to profile the computational complexity (in \textit{MACs}) and output size (in \textit{kb}) of each layer in the task-specific model. Given the cut layer $L$, $\mathcal{O}(L)$ and $\mathcal{S}(L)$ can be retrieved from the model profile.

\section{Additional Experimental Results}
\label{appendix_exp_results}

\subsection{Training Curves}
\label{appendix_sub_curves}

We provide the convergence plots in Figure~\ref{fig:supp_learning_curve_baselines} and Figure~\ref{fig:supp_learning_curve_varying_model}, to complement the experiments in Table~\ref{tab:exp_result_momentum} and Figure~\ref{fig:exp_various_models}.

As shown in Figure~\ref{fig:supp_learning_curve_baselines}, we run all the methods under each setup three times with different random seeds. In this way, the data silos across clients have distinct initializations in each trail for CIFAR10, CIFAR100, and Tiny-ImageNet benchmarks. From the results, we observe consistent improvements in the global model performance by \SysName across three tasks and three baselines (i.e., FedAvg, FedProx, and edNAR). Moreover, the global model in \SysName converges faster than those in the other three counterpart methods (i.e., FedAvgM, SlowMo, and FedNAG).

We also present the learning curve for the robustness analysis in Figure~\ref{fig:supp_learning_curve_varying_model}. We evaluate the performance of \SysName and its counterparts on Tiny-ImageNet with different task model architectures over a maximum of 150 communication rounds. Results indicate that \SysName further enhances the performance and convergence of the global model in the baseline method (i.e., FedAvg) by a large margin, particularly for deep and complex models such as DenseNet-121 and DenseNet-169. Furthermore, the performance gains achieved by \SysName are more robust to the choice of task model compared to the three momentum-based methods.

\subsection{\textbf{Sensitivity Study}}
\label{appendix_sub_sensitivity}

\begin{figure*}
    \centering
    \includegraphics[width=\linewidth]{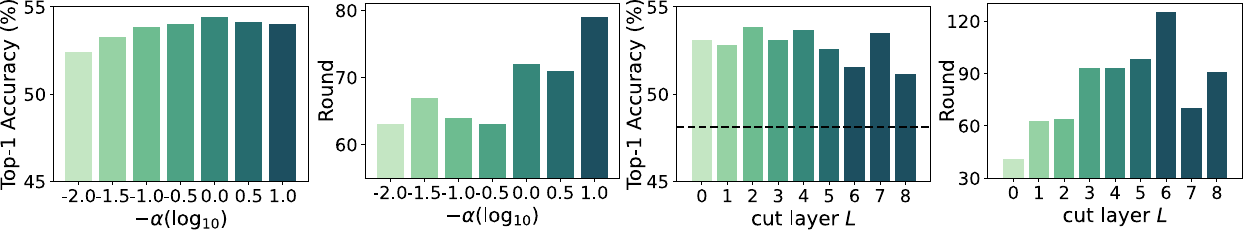}\\
    \vspace{1.5mm}
    \includegraphics[width=\linewidth]{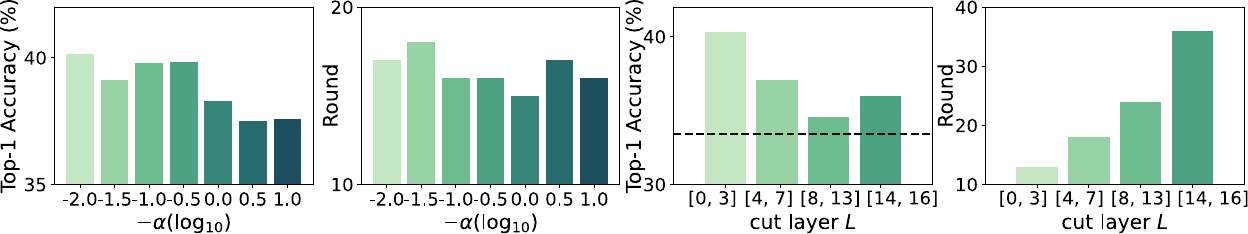}\\
    \vspace{1.5mm}
    \caption{Sensitivity study of \SysName under CIFAR100 (top) and Tiny-ImageNet (bottom) datasets: (left two) accuracy and convergence under varying staleness factor $\alpha$; (right two) performance under different cut layers $L$. For Tiny-ImageNet with ResNet-34, the cut layer in each round is randomly selected from a predefined range. The dashed line represents the accuracy of FedAvg under the same setting for comparison.}
    \vspace{-4mm}
    \label{fig:supp_sensitivity_study}
\end{figure*}

\begin{figure*}
    \centering
    \includegraphics[width=\linewidth]{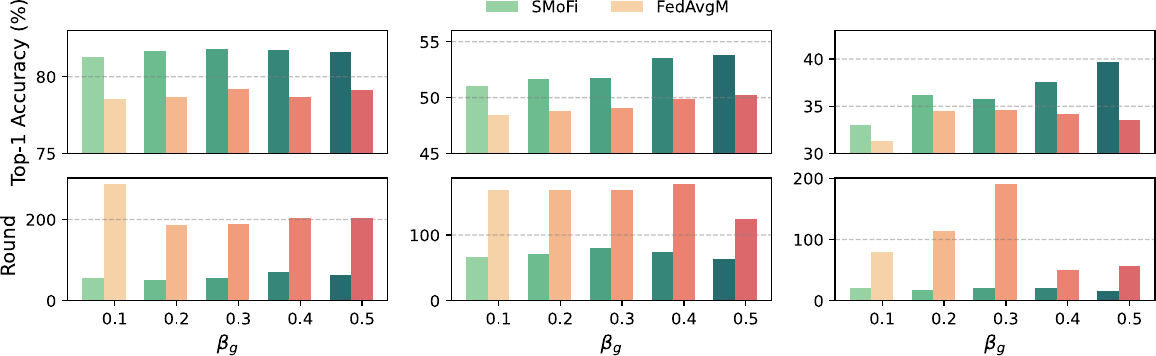}
    \vspace{-6mm}
    \caption{Sensitivity study of \SysName and FedAvgM to the global momentum coefficient $\beta_g$ ranging from 0.1 to 0.5. We report the \textit{Top-1 accuracy} (top) and \textit{round-to-accuracy} performance (bottom) across three benchmarks (from left to right): CIFAR10, CIFAR100, and Tiny-ImageNet. For a fair comparison, we set identical accuracy targets across three datasets: 70\%, 43\%, and 30\%, for CIFAR-10, CIFAR-100, and Tiny-ImageNet, respectively.}
    \label{fig:supp_sensitivity_study_global_momentum}
\end{figure*}

\begin{figure*}
    \centering
    \includegraphics[width=\linewidth]{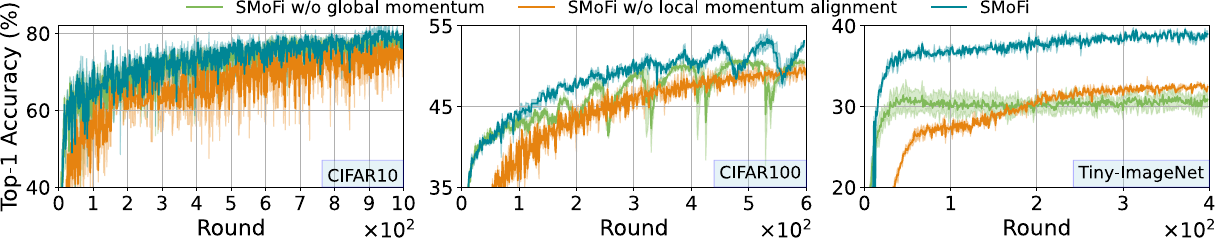}
    \vspace{-6mm}
    \caption{Ablation study for \SysName across three benchmarks (from left to right): CIFAR10, CIFAR100, and Tiny-ImageNet. We investigate the impact of global momentum and local momentum alignment on the performance of \SysName.}
    \label{fig:supp_ablation_study}
\end{figure*}

\begin{table*}[t]
\caption{Performance comparison between \SysName and momentum-based counterparts across three benchmarks under two data distributions: \textbf{moderately non-IID and IID}. Methods denoted with \textbf{+} represent baseline FedAvg combined with \SysName or its counterparts. We report the average and standard deviation of \textit{Top-1 accuracy}, the number of communication rounds (\textbf{R}) required to reach the target accuracy (i.e., 90\% of the best global model accuracy by FedAvg), and the corresponding convergence speedup (\textbf{R}$\uparrow$). All results are averaged over three trials, with \textbf{bold} indicating the best performance for each setup.}
\resizebox{\linewidth}{!}{
\begin{tabular}{lcccccccccc}
\toprule[1pt]
\textbf{Setup} && \multicolumn{2}{c}{\textbf{CIFAR-10/DIR{\scriptsize\textbf{100}}\textbf{(0.5)}}}    && \multicolumn{2}{c}{\textbf{CIFAR-100/DIR{\scriptsize\textbf{100}}\textbf{(0.5)}}}        && \multicolumn{2}{c}{\textbf{Tiny-ImageNet/DIR{\scriptsize\textbf{200}}\textbf{(0.5)}}}\\ 
\cmidrule(r){1-1} \cmidrule(r){3-4} \cmidrule(r){6-7} \cmidrule(r){9-10}
\textbf{Methods}  && \textbf{Acc. (\%)} & \textbf{R}/\textbf{R}$\uparrow$ && \textbf{Acc. (\%)} & \textbf{R}/\textbf{R}$\uparrow$ && \textbf{Acc. (\%)} & \textbf{R}/\textbf{R}$\uparrow$ \\ \hline 
\textit{FedAvg}~\citep{mcmahan2017}  && 82.04\scriptsize$\pm$0.22 & 85/1.00$\times$  && 48.35\scriptsize $\pm$0.58 & 164/1.00$\times$  && 31.57\scriptsize$\pm$2.43 & 126/1.00$\times$ \\
\textbf{+} FedAvgM~\citep{hsu2019measuring} && 82.67\scriptsize$\pm$0.24 & 65/1.31$\times$  && 51.70\scriptsize $\pm$0.67 & 114/1.44$\times$  && 33.87\scriptsize$\pm$1.00 & 100/1.26$\times$  \\
\textbf{+} SlowMo~\citep{wang2019slowmo} && 81.17\scriptsize$\pm$0.30 & 101/0.84$\times$  && 52.78\scriptsize $\pm$0.25 & 100/1.64$\times$  && 35.60\scriptsize$\pm$0.41 & 37/3.41$\times$ \\
\textbf{+} FedNAG~\citep{yang2022federated} && 82.26\scriptsize$\pm$0.14 & 75/1.13$\times$  && 49.98\scriptsize $\pm$0.52 & 147/1.12$\times$  && 31.88\scriptsize$\pm$0.11 & 241/0.52$\times$ \\
\rowcolor{gray!20} \textbf{+} \SysName && \textbf{84.62\scriptsize$\pm$0.07} & \textbf{34/2.50$\times$}  && \textbf{55.54\scriptsize $\pm$0.11} & \textbf{55/2.98$\times$}  && \textbf{40.90\scriptsize$\pm$0.33} & \textbf{12/10.50$\times$}\\

\toprule[1pt]
\textbf{Setup} && \multicolumn{2}{c}{\textbf{CIFAR-10/IID{\scriptsize\textbf{100}}}} && \multicolumn{2}{c}{\textbf{CIFAR-100/IID{\scriptsize\textbf{100}}}} && \multicolumn{2}{c}{\textbf{Tiny-ImageNet/IID{\scriptsize\textbf{200}}}}\\ 
\cmidrule(r){1-1} \cmidrule(r){3-4} \cmidrule(r){6-7} \cmidrule(r){9-10}
\textbf{Methods}  && \textbf{Acc. (\%)} & \textbf{R}/\textbf{R}$\uparrow$ && \textbf{Acc. (\%)} & \textbf{R}/\textbf{R}$\uparrow$ && \textbf{Acc. (\%)} & \textbf{R}/\textbf{R}$\uparrow$ \\ \hline 
\textit{FedAvg}~\citep{mcmahan2017}  && 83.52\scriptsize$\pm$0.20 & 44/1.00$\times$  && 50.29\scriptsize $\pm$0.15 & 164/1.00$\times$  && 32.38\scriptsize$\pm$0.67 & 147/1.00$\times$ \\
\textbf{+} FedAvgM~\citep{hsu2019measuring} && 84.50\scriptsize$\pm$0.11 & 33/1.33$\times$  && 53.47\scriptsize $\pm$0.06 & 109/1.50$\times$  && 35.90\scriptsize$\pm$1.27 & 35/4.20$\times$  \\
\textbf{+} SlowMo~\citep{wang2019slowmo} && 83.24\scriptsize$\pm$0.04 & 55/0.80$\times$  && 54.66\scriptsize $\pm$0.07 & 89/1.84$\times$  && 34.99\scriptsize$\pm$1.68 & 85/1.73$\times$ \\
\textbf{+} FedNAG~\citep{yang2022federated} && 84.36\scriptsize$\pm$0.03 & 42/1.05$\times$  && 50.72\scriptsize $\pm$0.04 & 171/0.96$\times$  && 31.87\scriptsize$\pm$0.44 & 252/0.58$\times$ \\
\rowcolor{gray!20} \textbf{+} \SysName && \textbf{86.88\scriptsize$\pm$0.15} & \textbf{14/3.14$\times$}  && \textbf{55.46\scriptsize $\pm$0.18} & \textbf{60/2.73$\times$}  && \textbf{42.48\scriptsize$\pm$0.32} & \textbf{11/13.36$\times$}

\\[-0.65ex]
\bottomrule[1pt]
\end{tabular}
}
\label{tab:supp_exp_result_DIR_0.5_IID}
\end{table*}

\begin{table*}[t]
\caption{Performance comparison between \SysName and momentum-based counterparts on the Tiny-ImageNet benchmark under \textbf{varying client participation scales}. In each communication round, a subset of clients $\mathcal{J}^{n}$ is randomly selected from 200 clients $\mathcal{J}$, with participation rates of 5\%, 10\%, and 15\%. Methods denoted with \textbf{+} represent baseline FedAvg combined with \SysName or its counterparts. We report the average and standard deviation of \textit{Top-1 accuracy}, the number of communication rounds (\textbf{R}) required to reach the target accuracy (i.e., 90\% of the best global model accuracy by FedAvg), and the corresponding convergence speedup (\textbf{R}$\uparrow$). All results are averaged over three trials, with \textbf{bold} indicating the best performance for each setup.}
\resizebox{\linewidth}{!}{
\begin{tabular}{lcccccccccc}
\toprule[1pt]
\textbf{Setup} && \multicolumn{2}{c}{$\lvert\mathcal{J}^{n}\lvert=10$} && \multicolumn{2}{c}{$\lvert\mathcal{J}^{n}\lvert=20$} && \multicolumn{2}{c}{$\lvert\mathcal{J}^{n}\lvert=30$}\\ 
\cmidrule(r){1-1} \cmidrule(r){3-4} \cmidrule(r){6-7} \cmidrule(r){9-10}
\textbf{Methods}  && \textbf{Acc. (\%)} & \textbf{R}/\textbf{R}$\uparrow$ && \textbf{Acc. (\%)} & \textbf{R}/\textbf{R}$\uparrow$ && \textbf{Acc. (\%)} & \textbf{R}/\textbf{R}$\uparrow$ \\ \hline 
\textit{FedAvg}~\citep{mcmahan2017}  && 31.66\scriptsize$\pm$0.05 & 111/1.00$\times$  && 32.96\scriptsize$\pm$0.62 & 125/1.00$\times$  && 33.15\scriptsize$\pm$0.53 & 139/1.00$\times$ \\
\textbf{+} FedAvgM~\citep{hsu2019measuring} && 33.25\scriptsize$\pm$1.28 & 96/1.16$\times$  && 34.38\scriptsize$\pm$0.84 & 58/2.16$\times$  && 34.89\scriptsize$\pm$0.05 & 53/2.62$\times$ \\
\textbf{+} SlowMo~\citep{wang2019slowmo} && 33.98\scriptsize$\pm$0.19 & 165/0.67$\times$  && 34.91\scriptsize$\pm$0.24 & 64/1.95$\times$  && 34.40\scriptsize$\pm$0.72 & 58/2.40$\times$ \\
\textbf{+} FedNAG~\citep{yang2022federated} && 31.14\scriptsize$\pm$0.18 & 227/0.49$\times$  && 31.43\scriptsize$\pm$1.11 & 244/0.51$\times$  && 31.28\scriptsize$\pm$1.02 & 284/0.49$\times$ \\
\rowcolor{gray!20} \textbf{+} \SysName && \textbf{38.23\scriptsize$\pm$0.64} & \textbf{63/1.76$\times$}  && \textbf{39.58\scriptsize$\pm$0.46} & \textbf{25/5.00$\times$}  && \textbf{39.70\scriptsize$\pm$0.26} & \textbf{20/6.95$\times$}
\\[-0.65ex]
\bottomrule[1pt]
\end{tabular}
}
\label{tab:supp_exp_result_involvement}
\end{table*}

In Figure~\ref{fig:exp_sensitivity_study}, we provide the sensitivity analysis of \SysName on the CIFAR10 dataset. Figure~\ref{fig:supp_sensitivity_study} extends the sensitivity study to the CIFAR100 and Tiny-ImageNet under the same settings: the staleness factor $\alpha$ ranges from -0.01 to -10, and the cut layer $L$ varies across the core blocks of the task models (ResNet-18 for CIFAR100 and ResNet-34 for Tiny-ImageNet). For the staleness factor $\alpha$, performance trends differ between datasets. On CIFAR100, a smaller $\alpha$, which assigns lower weights to historical momentum during alignment, slows down the model convergence while introducing marginal accuracy gains, which signifies the importance of historical momentum to the model convergence on CIFAR100 task. Observations on Tiny-ImageNet dataset are similar to those on CIFAR10: a larger $\alpha$ improves the global model performance in the long run. However, the $\alpha$ shows minimal impact on convergence speed for Tiny-ImageNet.

The effect of varying the cut layer $L$ is constant on CIFAR100 and Tiny-ImageNet, aligning with the observations on CIFAR10: \SysName brings more significant benefits to both accuracy and convergence speed when the model is split at a smaller $L$ layer, leaving a larger portion of the model to the server. For Tiny-ImageNet dataset using ResNet-34, we randomly select the cut layer $L$ in each round from four ranges: $[0,3]$, $[4,7]$, $[8,13]$, and $[14,16]$. Across all configurations, \SysName outperforms FedAvg in terms of accuracy, even in the extreme case where only the output block resides on the server (i.e., $L=8$ in ResNet-18 or $L=16$ in ResNet-34).

Moreover, we investigate the sensitivity of \SysName to the global momentum coefficient $\beta_g$, in comparison with FedAvgM, which introduces this mechanism for global model updates. To ensure a fair comparison of round-to-accuracy performance, we set identical target accuracies for \SysName and FedAvgM across three datasets: 70\% for CIFAR-10, 43\% for CIFAR-100, and 30\% for Tiny-ImageNet. Figure~\ref{fig:supp_sensitivity_study_global_momentum} reports the results under $\beta_g$ ranging from 0.1 to 0.5. We observe that: 1) the global model performance is more sensitive to $\beta_g$ in more complex tasks such as CIFAR100 and Tiny-ImageNet, where a larger $\beta_g$ tends to be more beneficial; 2) \SysName consistently outperforms FedAvgM in both accuracy and convergence owing to its momentum alignment, which imposes tighter, step-wise constraints, in contrast to the round-wise constraints applied only during global model updates in FedAvgM.

\subsection{Ablation Study}
\label{appendix_sub_ablation_study}
To validate the key components in \SysName, we compare \SysName and its variants on three datasets, as shown in Figure~\ref{fig:supp_ablation_study}. The ablation study shows that the step-wise momentum fusion across server-side optimizers significantly benefits global model performance by comparing \textit{\SysName} and \textit{\SysName w/o momentum alignment}. Such performance gain becomes more significant as the task and model complexity increase--from CIFAR10 to CIFAR100 and Tiny-ImageNet. We also observe that the local momentum alignment plays a crucial role in speeding up global model convergence. Besides, applying momentum updates to the global model further improves performance, particularly in tasks such as Tiny-ImageNet with ResNet-34. These two key components of \SysName jointly contribute to faster convergence in early rounds and better performance over the long term.

\subsection{\textbf{More Data Distributions}}
\label{appendix_sub_data_distribution}
In Table~\ref{tab:exp_result_momentum}, we compare the \SysName with momentum-based counterparts under a non-IID setting using a Dirichlet distribution with concentration 0.2, i.e., $\mathrm{Dir}(0.2)$. In this section, we extend our analysis to a moderately heterogeneous setting ($\mathrm{Dir}(0.5)$) and an $\mathrm{IID}$ setup. In Table~\ref{tab:supp_exp_result_DIR_0.5_IID}, we compare original baselines (i.e., FedAvg) and the baselines combined with \SysName and momentum-based counterpart methods (denoted with \textbf{+}) on three datasets. For consistency, all experimental configurations--including the cut layer for \SysName and hyperparameter settings--are kept identical to those in Table~\ref{tab:exp_result_momentum}. Experimental results show that \SysName consistently improves the performance of baselines across all benchmarks and data distributions. Compared to the three momentum-based counterparts, \SysName yields superior global model performance in terms of both accuracy and convergence speed, although the round-to-accuracy improvements of counterpart methods are also significant in more balanced data settings. Besides, we observe that the performance gain from \SysName is more noticeable in scenarios involving more complex tasks, deeper models, and greater data imbalance. This advantage aligns well with real-world SFL deployments, where the edge devices often suffer from severe training latency for complex models, while the central server typically possesses significantly greater computational resources, making it more efficient to offload a larger portion of the model to the server for improved training performance.

\begin{table*}[t]
\centering
\caption{Performance comparison between \SysName and momentum-based counterparts across two datasets: the \textbf{Shakespeare} text benchmark and the \textbf{Google Speech} audio benchmark. Methods denoted with \textbf{+} represent baseline FedAvg combined with \SysName or its counterparts. We report the average and standard deviation of \textit{Top-1 accuracy}, the number of communication rounds (\textbf{R}) required to reach the target accuracy (i.e., 90\% of the best global model accuracy by FedAvg), and the corresponding convergence speedup (\textbf{R}$\uparrow$). All results are averaged over three trials, with \textbf{bold} indicating the best performance for each setup.}
\resizebox{0.9\linewidth}{!}{
\begin{tabular}{lccccccccc}
\toprule[1pt]
\textbf{Setup} && \multicolumn{3}{c}{\textbf{Shakespeare/Inherently Non-IID}} && \multicolumn{3}{c}{\textbf{Google Speech/DIR{\scriptsize\textbf{200}}\textbf{(0.2)}}}\\ 
\cmidrule(r){1-1} \cmidrule(r){3-5} \cmidrule(r){7-9}
\textbf{Methods}  && \textbf{Acc. (\%)} & \textbf{R} & \textbf{R}$\uparrow$ && \textbf{Acc. (\%)} & \textbf{R} & \textbf{R}$\uparrow$\\ \hline 
\textit{FedAvg}~\citep{mcmahan2017}  && 46.08\scriptsize$\pm$0.53 & 170 & 1.00$\times$  && 90.84\scriptsize $\pm$0.11 & 31 & 1.00$\times$ \\
\textbf{+} FedAvgM~\citep{hsu2019measuring} && 49.13\scriptsize$\pm$0.29 & \textbf{62} & \textbf{2.74$\times$}  && \textbf{91.14\scriptsize $\pm$0.07} & 24 & 1.29$\times$ \\
\textbf{+} SlowMo~\citep{wang2019slowmo} && 47.62\scriptsize$\pm$0.74 & 85 & 2.00$\times$  && 90.47\scriptsize $\pm$0.02 & 54 & 0.57$\times$ \\
\textbf{+} FedNAG~\citep{yang2022federated} && 42.56\scriptsize$\pm$2.59 & 210 & 0.81$\times$  && 90.46\scriptsize $\pm$0.05 & 49 & 0.63$\times$ \\
\rowcolor{gray!20} \textbf{+} \SysName && \textbf{51.83\scriptsize$\pm$0.21} & 74 &2.30$\times$  && 90.41\scriptsize$\pm$0.05 & \textbf{13} &\textbf{2.38$\times$}
\\[-0.65ex]
\bottomrule[1pt]
\end{tabular}
}
\vspace{-3mm}
\label{tab:supp_exp_result_momentum_extra_dataset}
\end{table*}

Moreover, we evaluate the performance of \SysName under varying levels of client participation on the large-scale Tiny-ImageNet benchmark, which involves 200 candidate clients. In the $n-$th round, the server randomly selects a subset of clients $\mathcal{J}^{n}$ to perform local training. Table~\ref{tab:supp_exp_result_involvement} reports results across different participation scale, with $\lvert\mathcal{J}^{n}\lvert$ ranging from 10 to 30. Increased client involvement significantly accelerates global model convergence for most methods, as evidenced by fewer communication rounds required to reach the target accuracy. For FedAvg, we observe that involving more clients in each round slows down the model convergence speed, while enhancing the accuracy in the long run. However, the accuracy improves only marginally for methods such as FedAvgM and SlowMo, underscoring the difficulty of training with a large number of diverse local updates. In contrast, \SysName consistently outperforms both the baseline FedAvg and three momentum-based counterparts--even under an extreme participation rate of just 5\%--and shows more significant performance gains as $\lvert \mathcal{J}^{n} \rvert$ increases.

\subsection{\textbf{More Benchmarks}}
\label{appendix_sub_benchmark}

In addition to image classification tasks, we evaluate \SysName and its counterparts on a text benchmark (Shakespeare) and an audio dataset (Google Speech). The Shakespeare dataset is inherently non-IID, and we randomly select 100 speaking roles from the plays and assign them to corresponding clients, following work~\citep{li2023fednar}. For Google Speech, we simulate 200 clients and assign the training samples following the Dirichlet distribution. Note that the selection ratios are 0.2 and 0.1 for Shakespeare and Google Speech, respectively. We employ a stacked transformer model~\citep{vaswani2017attention} for Shakespeare with a mini-batch size of $B=100$, and VGG-11~\citep{simonyan2014very} for Google Speech with a mini-batch size of $B=32$. The hyperparameters search for two benchmarks includes: 1) $\beta_{\mathrm{avgm}}=0.7$ for Shakespeare and $\beta_{\mathrm{avgm}}=0.5$ for Google Speech; 2) $\alpha_{\mathrm{slow}}=1$ and $\beta_{\mathrm{slow}}=0.6$ for both benchmarks; 3) all clients execute the same number of local steps when running SlowMo, FedNAG, and MergeSFL, with $T=115$ for Shakespeare and $T=15$ for Google Speech. For \SysName, we set the staleness factor $\alpha$ to $-0.1$ and the global momentum coefficient $\beta_g$ to $0.5$ for both benchmarks.

Table~\ref{tab:supp_exp_result_momentum_extra_dataset} reports the comparisons with the momentum-based methods on the Shakespeare and Google Speech datasets. On Shakespeare, \SysName significantly improves the global model accuracy over the FedAvg baseline and outperforms all momentum-based counterparts. For Google Speech, while the accuracy differences between the baseline and momentum-based methods are marginal, \SysName achieves better round-to-accuracy performance.


In a nutshell, for complex tasks involving deeper models, \SysName speeds up model convergence and enhances long-term global model accuracy. For simpler tasks, \SysName maintains similarly high accuracy while significantly reducing training latency in temporal space, highlighting its efficiency and practicality in system-constrained environments.

\section{Discussions}
\label{appendix_discussion} 

\begin{table}[t]
\centering
\caption{Symbols and notations in the paper.}
\resizebox{0.95\linewidth}{!}{%
\begin{tabular}{c|l}
\noalign{\hrule height 1.5pt}
\rowcolor{gray!20}\textbf{Symbol} & \textbf{Explanation} \\ \hline
$\mathcal{J}$ & Set of clients \\
$\mathcal{J}^{n}$ & Subset of selected clients at cound $n$ \\
$\mathcal{J}^{(n, \tau)}$ & Subset of active clients at step $\tau$  \\
$\mathcal{H}^{n}$ & Historical Momentum buffers \\
$\mathcal{D}_{j}$ & Set of local samples of client $j$ \\
$\mathcal{B}_{j}$ & Mini-batch samples of client $j$ \\
$\mathcal{W}$/$\mathcal{W}_{c}/\mathcal{W}_{s}$ & Weights of global/client/server model \\
$m_{s}$/$m_{g}$  & Server-side/Global Momentum buffer \\
$\beta$/$\beta_g$ & SGDM/Global Momentum coefficient \\
$L$/$\mathbf{A}$ & Index/Activations of cut layer \\
$\hat{\mathbf{Y}}$ & Model Predictions \\
$\eta$ & Learning rate for optimizer \\
$\alpha$ & Staleness factor \\
$N$ & Number of communication rounds \\
$E$ & Number of local epochs \\
$T_{j}$ & Number of training steps of client $j$ \\
$\tau$ & Index of cut trainings step \\
$c$ & Number of task-specific classes \\
\noalign{\hrule height 1.5pt}
\end{tabular}
}
\label{tab:notation}
\end{table}

\begin{algorithm}[!t]
\DontPrintSemicolon
\small
\SetNoFillComment
  \SetKwFunction{Aggregation}{Update}
  \SetKwFunction{Alignment}{Fuse}
  \SetKwFunction{OneStep}{One-step Momentum SGD}
  \SetKwInOut{Input}{input}
  \SetKwInOut{Output}{output}
  \let\oldnl\nl
  \newcommand{\nonl}{\renewcommand{\nl}{\let\nl\oldnl}}
  \Input{
  A set of clients $\mathcal{J}$ with data $\{\mathcal{D}_{j}\}_{j \in \mathcal{J}}$; Index of cut layer $L$; Batch size $B$; Local epochs $E$; Communication rounds $N$; Learning rate $\eta$; Momentum coefficient $\beta$; Global Momentum coefficient $\beta_g$; Staleness factor $\alpha$; Initialized full model weights $\mathcal{W}^{0}$.\\
  } 
  \Output{
  $\mathcal{W}^{N}$
  }
  \BlankLine
    \For{each round $n = 1, 2, \cdots, N$}
{
    $\mathcal{J}^{n}\gets$ randomly select a subset of clients from $\mathcal{J}$\\
    $\{T_{j}\}_{j\in\mathcal{J}^{n}}\gets$ obtain local steps $E \lfloor \frac{\lvert \mathcal{D}_{j} \lvert}{B} \rfloor$ for $j\in\mathcal{J}$\\
    $\{\mathcal{W}_{c, j}^{(n, 0)}, \mathcal{W}_{s, j}^{(n, 0)} \}_{j\in\mathcal{J}^{n}}\gets\mathcal{W}^{n-1}=[\mathcal{W}^{n-1}_{c}, \mathcal{W}^{n-1}_{s}]$ split model at $L$-th layer\\
    $\bar{m}_{s}^{(n, 0)}\gets\mathbf{0}$, $\mathcal{H}^{n}\gets\emptyset$\\
    \For{step $\tau = 0, 1, \cdots, max\{T_{j}\}_{j\in\mathcal{J}^{n}} - 1$}
    {
        \For{$j\in \mathcal{J}^{(n, \tau)}$ \textbf{in parallel}}
        {
        \tcp*[h]{Client Forward Propagation}\\
        $\mathbf{A}^{(n, \tau)}_{j}\gets$ $\{f^{\mathcal{W}_{c, j}^{(n, \tau)}}(\mathbf{x})\}_{\mathbf{x}\in\mathcal\mathcal{B}^{\tau}_{j}}$, $\mathcal{B}^{\tau}_{j}\subseteq\mathcal{D}_{j}$\\
        \tcp*[h]{Server Forward Propagation}\\
        $\hat{\mathbf{Y}}\gets\{f^{\mathcal{W}_{s, j}^{(n, \tau)}}(a)\}_{a\in \mathbf{A}^{(n, \tau)}_{j}}$\\
        \tcp*[h]{Server Backpropagation}\\
        $\nabla\mathcal{L}_{\mathcal{B}^{\tau}_{j}}(\mathcal{W}_{s, j}^{(n, \tau)})\gets$ Stochastic gradients on $\mathbf{A}^{(n, \tau)}_{j}$\\
        $m_{s, j}^{(n, \tau+1)} \gets \beta\bar{m}_{s}^{(n, \tau)}+\nabla\mathcal{L}_{\mathcal{B}^{\tau}_{j}}(\mathcal{W}_{s, j}^{(n, \tau)})$ \\
        $\mathcal{W}_{s, j}^{(n, \tau+1)} \gets \mathcal{W}_{s, j}^{(n, \tau)} - \eta m_{s, j}^{(n, \tau+1)}$\\
        \tcp*[h]{Client Backpropagation}\\
        $\nabla\mathcal{L}_{\mathcal{B}^{\tau}_{j}}({\mathcal{W}_{L}}_{s, j}^{(n, \tau)})\gets$ Sending back to client $j$\\
        $\mathcal{W}_{c, j}^{(n, \tau+1)}\gets$ Updating client-side submodel $\mathcal{W}_{c, j}^{(n, \tau)}$\\
            \If{$\tau = \lvert T_{j} \lvert$}
            {
            \tcp*[h]{Historical Momentum Update}\\
            $\mathcal{H}^{n}$$\gets$ record $m_{s, j}^{(n, \tau+1)}$\\
            }
        }
        \tcp*[h]{Momentum Alignment}\\
        $\bar{m}_{s}^{(n, \tau+1)}$$\gets$ Updating($\{m_{s, j}^{(n, \tau+1)}\}_{j\in\mathcal{J}^{(n, \tau)}}$, $\mathcal{H}^{n}$, $\alpha$) by Equation~\ref{eq:approach_5}\\
    }
    \tcp*[h]{Global Model Aggregation}\\
    $\{\mathcal{W}^{n}_{j}\}_{j\in{\mathcal{J}^{n}}}\gets \{[\mathcal{W}^{(n, T_{j})}_{c, j}, \mathcal{W}^{(n, T_{j})}_{s, j}]\}_{j\in{\mathcal{J}^{n}}}$\\
    $\bar{\mathcal{W}}^{n}\gets$ Weighted Averaging of $\{\mathcal{W}^{n}_{j}\}_{j\in{\mathcal{J}^{n}}}$\\
    $m_{g}^{n}\gets\beta_{g}m_{g}^{n-1}+\mathcal{W}^{n-1}-\bar{\mathcal{W}}^{n}$\\
    $\mathcal{W}^{n}\gets\mathcal{W}^{n-1}-m_{g}^{n}$\\
}
\caption{Split FL with \SysName}
\label{algo:pseudocode_2}
\end{algorithm}

In this section, we first discuss the workflow of \SysName as a plug-in method when integrated into the split FL training paradigm. We then provide a detailed comparison with the other two split FL frameworks, i.e., SFLV1 and SFLV2.

\subsection{Split FL with \SysName}
\label{subsec:framework_workflow}


Table~\ref{tab:notation} summarizes the notations used throughout this paper. Algorithm~\ref{algo:pseudocode_2} outlines the overall split FL workflow when integrated with \SysName. As a client-transparent method, the client-side training remains unchanged (\textbf{row 8, 13, and 14}) while \SysName introduces additional operations at the central server: it aligns the momentum buffers across all server-side optimizers at each local step (\textbf{row 17}), which then imposes constraints on the surrogate server-side models training by synchronized momentum buffer in the subsequent local step (\textbf{row 11-12}). The global client- and server-side models are updated at the end of each communication round (\textbf{row 18-21}) through weighted averaging with a global momentum term, inspired by the work~\citep{hsu2019measuring}.

\subsection{Comparison with Other Split FL Frameworks}
\label{subsec:framework_comparison}
In this section, we analyze two split FL frameworks that differ in the server-side submodel updates: \textit{Parallel updating} (e.g., SFLV1) and \textit{Sequential updating} (e.g., SFLV2). We then present the rationale behind the design of \SysName: it preserves the system efficiency by parallel updating while imposing tighter and more stable constraints on the server-side submodel updates.

In the \textit{Parallel updating} framework, the server updates surrogate server-side models in parallel and periodically aggregates submodels at a frequency of $\bar{\tau}$. As the results reported in Table~\ref{tab:exp_result_SFL}, the performance of this framework is highly sensitive to the choice of $\bar{\tau}$. Both client-side and server-side models in \textit{Parallel updating} require periodic synchronization. It allows the global objective to be approached by separately optimizing local objectives in parallel across clients. Based on this concept, increasing the server-side aggregation frequency (a smaller $\bar{\tau}<N$) generally yields performance gains. However, the overall performance of the full global model may still be hindered due to the asynchronous client-side model aggregation, which remains fixed at each communication round. Note that the performance of the global model in \textit{Parallel updating} has not yet been improved, maintaining the performance of it in the FL setting when $\bar{\tau}=N$. Besides, surrogate submodels are reinitialized with the aggregated global server-side model, and the corresponding SGDM optimizers are reset. This suppresses the benefits of momentum, further impacting overall performance. The workflow of \textit{Sequential updating} also has inherent limitations, such as longer training latency, which is exacerbated not only by a larger number of local epochs $E$, but also by an increase in the scale of participating clients. This is primarily because \textit{Sequential updating} replaces server-side aggregation with updating the server-side model by sequential training with clients. Moreover, since \textit{Sequential updating} maintains a single model on the server side, the structure of the server-side model is inherently fixed. It means the framework is less adaptable to varying cut layers between the client-side and server-side models, limiting its flexibility in practical deployments.

Compared to \textit{Sequential updating}, the \textit{Parallel updating} framework offers better performance in terms of system efficiency, as it fully leverages parallel training instead of merely transforming the local model training in FL into a collaborative pipeline. \SysName builds upon and improves the \textit{Parallel updating} framework in two key ways: 1) \SysName reduces the inconsistency between the client-side and corresponding surrogate server-side models within the same client, as they are synchronized at the same round-level frequency; and 2) \SysName also alleviates the inconsistency across server-side models through momentum alignment at step-level frequency. Our \SysName with momentum alignment imposes tighter constraints on the server-side training compared with (split) FL training with momentum SGD as the optimizer under the same settings.


\section{Convergence Analysis}
\label{appendix_convergence}

\subsection{Preliminaries}
As formulated in Equation~\ref{eq:approach_1}, the global objective $\mathcal{L}(\mathcal{W})$ in split FL is defined as the weighted averaging of the local objectives $\mathcal{L}_{\mathcal{D}_{j}}(\mathcal{W})=\mathcal{L}_{j}(\mathcal{W})$ across clients $\mathcal{J}$: 
\begin{equation}
    \mathcal{L}(\mathcal{W})=\sum_{j\in\mathcal{J}}p_{j}\mathcal{L}_{j}(\mathcal{W}).
    \label{eq:supp_convergence_1}
\end{equation}
Following the Assumption~\ref{main_assumption_1}, where each local objective is $L$-smooth, we can also have that the global objective is $L$-smooth. The convergence analysis of \SysName is essentially bounding the optimization error between the global model after $N$ communication rounds, $\mathcal{W}^N$, and the global optimum $\mathcal{W}^*$.

The work~\citep{han2024convergence} proposes a solution for deriving convergence guarantees in split training by decomposing the error bound into the client-side and server-side components separately.

\begin{proposition}
Under the Assumption~\ref{main_assumption_1}, the error bound in split FL can be decomposed as

\begin{equation}
\small
\begin{aligned}
&\mathbb{E}[\mathcal{L}(\mathcal{W}^{N})] - \mathcal{L}(\mathcal{W}^{*}) \\
&\leq \frac{L}{2} \Big(
        \mathbb{E}[\Vert \mathcal{W}_{c}^{N} - \mathcal{W}_{c}^{*}\Vert^{2}_{2}]
    + \mathbb{E}[\Vert \mathcal{W}_{s}^{N} - \mathcal{W}_{s}^{*}\Vert^{2}_{2}]
    \Big),
\end{aligned}
\label{eq:supp_convergence_2}
\end{equation}
\label{supp_proposition_1}where $\mathcal{W}^{N}=[\mathcal{W}_{c}^{N}, \mathcal{W}_{s}^{N}]$ and $\mathcal{W}^{*}=[\mathcal{W}_{c}^{*}, \mathcal{W}_{s}^{*}]$ denote the client-side and server-side weights of the global model and the global optimum, respectively.
\end{proposition}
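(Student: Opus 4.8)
The plan is to reduce the claim to a one-line consequence of the $L$-smoothness of the \emph{global} objective together with the first-order optimality of $\mathcal{W}^{*}$. First I would establish that $\mathcal{L}$ is itself $L$-smooth: since $\mathcal{L}=\sum_{j\in\mathcal{J}}p_{j}\mathcal{L}_{j}$ with $\sum_{j}p_{j}=1$ and each $\mathcal{L}_{j}$ is $L$-smooth by Assumption~\ref{main_assumption_1}, the quadratic upper bound is preserved under the $p_{j}$-weighted sum, so $\mathcal{L}$ inherits the same constant $L$ (as noted in the preliminaries). Applying the smoothness inequality~\eqref{eq:main_convergence_1} to the global objective at the points $\mathcal{W}=\mathcal{W}^{N}$ and $\mathcal{W}^{\prime}=\mathcal{W}^{*}$ yields
\begin{equation*}
\mathcal{L}(\mathcal{W}^{N}) \leq \mathcal{L}(\mathcal{W}^{*}) + \langle \nabla\mathcal{L}(\mathcal{W}^{*}), \mathcal{W}^{N}-\mathcal{W}^{*}\rangle + \frac{L}{2}\Vert \mathcal{W}^{N}-\mathcal{W}^{*}\Vert_{2}^{2}.
\end{equation*}
Because $\mathcal{W}^{*}$ minimizes $\mathcal{L}$ over the whole parameter space, the stationarity condition $\nabla\mathcal{L}(\mathcal{W}^{*})=\mathbf{0}$ holds and the inner-product term vanishes, leaving $\mathcal{L}(\mathcal{W}^{N})-\mathcal{L}(\mathcal{W}^{*})\leq \tfrac{L}{2}\Vert\mathcal{W}^{N}-\mathcal{W}^{*}\Vert_{2}^{2}$.

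The second step exploits the block structure of the parameters. Since the full weight vector is the concatenation $\mathcal{W}=[\mathcal{W}_{c},\mathcal{W}_{s}]$ with $\mathcal{W}_{c}\in\mathbb{R}^{d_{c}}$, $\mathcal{W}_{s}\in\mathbb{R}^{d_{s}}$, and $d=d_{c}+d_{s}$, the squared Euclidean norm separates exactly across the two blocks:
\begin{equation*}
\Vert \mathcal{W}^{N}-\mathcal{W}^{*}\Vert_{2}^{2} = \Vert \mathcal{W}_{c}^{N}-\mathcal{W}_{c}^{*}\Vert_{2}^{2} + \Vert \mathcal{W}_{s}^{N}-\mathcal{W}_{s}^{*}\Vert_{2}^{2}.
\end{equation*}
I would then take expectations over the randomness of client sampling and mini-batch selection on both sides; as $\mathcal{L}(\mathcal{W}^{*})$ is deterministic, $\mathbb{E}[\mathcal{L}(\mathcal{W}^{*})]=\mathcal{L}(\mathcal{W}^{*})$, and combining the two displays gives precisely~\eqref{eq:supp_convergence_2}.

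I do not anticipate a genuine obstacle here, as the statement is essentially a structural identity rather than a hard estimate; its value lies in \emph{decoupling} the error into client-side and server-side terms that can subsequently be bounded by separate recursions. The only points requiring care are verifying the global smoothness inheritance (immediate once the per-client bounds of Assumption~\ref{main_assumption_1} are combined with nonnegative weights $p_{j}$) and confirming that the splitting of the norm is an exact equality rather than an inequality, which is guaranteed by the concatenated coordinate layout $d=d_{c}+d_{s}$. Everything after these checks is a direct substitution.
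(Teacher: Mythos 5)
Your proposal is correct and follows essentially the same route as the paper's own proof: apply $L$-smoothness of the global objective at $(\mathcal{W}^{N},\mathcal{W}^{*})$, kill the inner-product term via the stationarity condition $\nabla\mathcal{L}(\mathcal{W}^{*})=\mathbf{0}$, and split the squared norm exactly across the concatenated blocks $[\mathcal{W}_{c},\mathcal{W}_{s}]$ before (or after) taking expectations. Incidentally, your version states the block decomposition correctly as a sum, whereas the paper's final display contains a sign typo (a minus between the two expected squared norms); your write-up is the right reading.
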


\begin{proof}
Since global objective $\mathcal{L}(\mathcal{W})$ is $L$-smooth, we have
\begin{equation}
\small
\begin{aligned}
&\mathcal{L}(\mathcal{W}^{N}) - \mathcal{L}(\mathcal{W}^{*}) \\
    &\leq \langle \nabla\mathcal{L}(\mathcal{W}^{*}), \mathcal{W}^{N}-\mathcal{W}^{*} \rangle + \frac{L}{2}\Vert\mathcal{W}^{N}-\mathcal{W}^{*}\Vert_{2}^{2}.
\end{aligned}
\label{eq:supp_convergence_3}
\end{equation}
Taking the expectation over clients, we have
\begin{equation}
\small
\begin{aligned}
    \mathbb{E}[\mathcal{L}(\mathcal{W}^{N})] - \mathcal{L}(\mathcal{W}^{*}) \\
    &\hspace{-7em}\leq \langle \nabla\mathcal{L}(\mathcal{W}^{*}), 
        \mathbb{E}[\mathcal{W}^{N}-\mathcal{W}^{*}] \rangle
    + \frac{L}{2}\mathbb{E}[\Vert\mathcal{W}^{N}-\mathcal{W}^{*}\Vert_{2}^{2}].
\end{aligned}
\label{eq:supp_convergence_4}
\end{equation}

The global optimum $\mathcal{W}^{*}$ indicates $\nabla\mathcal{L}(\mathcal{W}^{*})=0$. Therefore, we have
\begin{equation}
\small
    \begin{aligned}
    &\mathbb{E}[\mathcal{L}(\mathcal{W}^{N})] - \mathcal{L}(\mathcal{W}^{*}) \\
    &\leq \frac{L}{2}\mathbb{E}[\Vert\mathcal{W}^{N}-\mathcal{W}^{*}\Vert_{2}^{2}]\\
    &= \frac{L}{2}\mathbb{E}[\Vert[\mathcal{W}_{c}^{N}, \mathcal{W}_{s}^{N}]-[\mathcal{W}_{c}^{*}, \mathcal{W}_{s}^{*}]\Vert_{2}^{2}]\\
    &= \frac{L}{2} \big( 
    \mathbb{E}\big[\Vert\mathcal{W}_{c}^{N} - \mathcal{W}_{c}^{*}\Vert_{2}^{2}\big] - \mathbb{E}\big[\Vert\mathcal{W}_{s}^{N} - \mathcal{W}_{s}^{*}\Vert_{2}^{2}\big] \big).
    \end{aligned}
\label{eq:supp_convergence_5}
\end{equation}
\end{proof}

The Preposition~\ref{supp_proposition_1} allows the convergence analysis of split FL to be conducted separately for the client-side and server-side submodels.

Unlike the original SFLV1, where the server periodically aggregates the surrogate submodels at a frequency of $\bar{\tau}$, \SysName synchronizes both the client-side and server-side submodels at the end of each communication round. Based on this fact, we introduce a lemma that provides an error bound of the global submodel, which can be applied to both sides of the model updates.

\begin{lemma}
    Under Assumptions~\ref{main_assumption_1}, \ref{main_assumption_2}, \ref{main_assumption_3}, and \ref{main_assumption_4}, the aggregated submodel $\hat{\mathcal{W}}^{n}$ at round $n$ converges toward its global optimum $\hat{\mathcal{W}^{*}}$. Given a small enough learning rate $\eta^{n}\leq\frac{1}{2LT}$, the following error bound holds:
    \begin{equation}
    \small
    \begin{aligned}
    &\mathbb{E}[\Vert \hat{\mathcal{W}}^{n} - \hat{\mathcal{W}}^{*}\Vert^{2}_{2}] \\
    &\leq \frac{ 16 \lvert\mathcal{J}\rvert \sum_{j\in\mathcal{J}} p_{j}^{2}(2\sigma^{2}+G^{2}) }{ \mu^{2}(\gamma+n) }\\
    &\hspace{2em}+ \frac{ 1536L  \sum_{j\in\mathcal{J}} p_{j}(2\sigma^{2}+G^{2}) }
    { \mu^{3}(\gamma+n)(\gamma+1) } \\
    &\hspace{2em} + \frac{ (\gamma+1) \mathbb{E}[\Vert \hat{\mathcal{W}}^{0} - \hat{\mathcal{W}}^{*}\Vert^{2}_{2}]}{ (\gamma+n) },
    \end{aligned}
    \label{eq:supp_convergence_6}
    \end{equation}
where $\gamma=\frac{8L}{\mu}-1$ and $T$ denotes the number of local steps.
\label{supp_lemma_1}
\end{lemma}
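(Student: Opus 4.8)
The plan is to prove Lemma~\ref{supp_lemma_1} by adapting the standard single-shot SGD convergence recursion (as in the FedAvg-style analyses of \citep{han2024convergence}) to the \SysName\ submodel update. Since the lemma is stated generically for an aggregated submodel $\hat{\mathcal{W}}^n$ that is resynchronized at the end of each round, I would first fix one communication round and track the per-round progress of the weighted-average model $\bar{\mathcal{W}}^n = \sum_{j} p_j \mathcal{W}_j^{(n,T_j)}$ toward $\hat{\mathcal{W}}^*$. The core object is the one-round decrement $\mathbb{E}[\Vert \bar{\mathcal{W}}^{n} - \hat{\mathcal{W}}^* \Vert_2^2]$, which I would expand by inserting the aggregated gradient direction, then split into a descent term (controlled by $\mu$-strong convexity, Assumption~\ref{main_assumption_2}) and error terms controlled by the stochastic-gradient variance $\sigma^2$ (Assumption~\ref{main_assumption_3}) and the gradient bound $G^2$ (Assumption~\ref{main_assumption_4}).

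The key steps, in order, are: (i) write the one-step expansion $\Vert \bar{\mathcal{W}}^{n,\tau+1} - \hat{\mathcal{W}}^*\Vert^2 = \Vert \bar{\mathcal{W}}^{n,\tau} - \hat{\mathcal{W}}^* \Vert^2 - 2\eta^n \langle \bar{g}^\tau, \bar{\mathcal{W}}^{n,\tau} - \hat{\mathcal{W}}^*\rangle + (\eta^n)^2 \Vert \bar{g}^\tau\Vert^2$, where $\bar{g}^\tau$ is the aggregated (momentum-fused) update direction; (ii) take expectations and use unbiasedness to replace stochastic gradients with true gradients in the inner product, then apply $\mu$-strong convexity to bound $-\langle \nabla\mathcal{L}(\cdot), \cdot\rangle$ by a contraction factor involving $\mu$; (iii) bound the second-moment term $\mathbb{E}\Vert\bar{g}^\tau\Vert^2$ using $2\sigma^2 + G^2$ together with the weights $p_j$, producing both the $\sum_j p_j^2$ factor (from variance under partial participation with rate $\theta$) and the $\sum_j p_j$ factor (from the bias/drift across local steps); (iv) unroll the recursion over the $T$ local steps within the round to get a per-round contraction of the form $\mathbb{E}[\Vert\bar{\mathcal{W}}^n - \hat{\mathcal{W}}^*\Vert^2] \le (1-\mu\eta^n)\mathbb{E}[\Vert\hat{\mathcal{W}}^{n-1}-\hat{\mathcal{W}}^*\Vert^2] + (\eta^n)^2 \Phi$, with $\Phi$ collecting the variance/drift constants; and (v) solve this linear recurrence across rounds with the diminishing step size $\eta^n = \frac{4}{\mu(\gamma+n)}$ and $\gamma = 8L/\mu - 1$, which by a standard induction yields the three $\mathcal{O}(1/(\gamma+n))$ terms displayed in \eqref{eq:supp_convergence_6}.

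The main obstacle I anticipate is step~(iii), specifically controlling how the momentum fusion affects the aggregated direction $\bar{g}^\tau$. Because \SysName\ replaces each optimizer's own momentum buffer $m_{s,j}^{(n,\tau)}$ with the synchronized $\bar{m}_s^{(n,\tau)}$ in Equation~\ref{eq:approach_4}, the update direction is no longer a simple sum of independent per-client gradients; the fused momentum introduces cross-client coupling and history dependence (further complicated by the staleness factor $s_\alpha$ in Equation~\ref{eq:approach_6} once clients begin dropping out). To keep the analysis tractable I would argue that, because $\lvert\mathcal{J}^{(n,\tau)}\rvert + \lvert\mathcal{H}^n\rvert = \lvert\mathcal{J}^n\rvert$ is held constant at every step by design, the aligned momentum is a convex combination of bounded gradient terms, so its second moment remains bounded by the same $G^2$ and its variance contribution by $\sigma^2$; this lets the momentum-fused recursion inherit the same constants as plain SGDM up to the coefficient $\beta$, which is absorbed into the effective step size. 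The honest gap here is that a fully rigorous treatment of the staleness-weighted historical terms would require bounding the deviation between stale and current momentum buffers, and I would either invoke Assumption~\ref{main_assumption_4} to crudely bound those stale contributions by $G$ or defer their precise accounting, since the lemma's stated bound only needs order-correct $\sigma^2$, $G^2$, and initialization-distance terms rather than tight constants.

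Finally, with the per-round contraction in hand, the cross-round induction in step~(v) is routine but deserves care: I would verify the base case at $n=0$ (where the bound reduces to the initialization term $(\gamma+1)\mathbb{E}[\Vert\hat{\mathcal{W}}^0 - \hat{\mathcal{W}}^*\Vert^2]/(\gamma+n)$ with the variance terms vanishing), then check that the inductive step preserves the claimed form under the chosen $\eta^n$, using the algebraic identity that $(1-\mu\eta^n)/(\gamma+n-1) \le 1/(\gamma+n)$ holds for this step size. The separation of $\sigma^2$ into the $\sum_j p_j^2/(\gamma+n)$ term and the drift into the $\sum_j p_j/[(\gamma+n)(\gamma+1)]$ term is what distinguishes the two variance constants $A$ and $B$ that later appear in Theorem~\ref{main_theorem_1} after combining the client-side and server-side bounds via Proposition~\ref{supp_proposition_1}.
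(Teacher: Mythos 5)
Your overall skeleton (expanding the round update around the aggregated gradient, killing the cross term by unbiasedness, strong convexity plus a drift correction, Cauchy--Schwarz second-moment bounds, then an induction under the diminishing step size) is the same blueprint the paper uses. However, two of your specific choices diverge from what Lemma~\ref{supp_lemma_1} actually covers, and one of them would make your proof fail exactly where you predicted. The momentum fusion never enters the paper's proof of this lemma at all: the paper models the round update as plain SGD aggregation, $\hat{\mathcal{W}}^{n} = \hat{\mathcal{W}}^{n-1} - \eta^{n}\sum_{j\in\mathcal{J}}p_{j}\sum_{\tau\in[T]}g_{j}^{(n,\tau)}$, where $g_{j}^{(n,\tau)}$ is a raw stochastic gradient (the proof explicitly says the submodel is ``updated over $T$ steps of SGD''). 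Your plan to keep the fused buffer $\bar{m}_{s}^{(n,\tau)}$ and the staleness-weighted history inside the update direction, and then ``absorb $\beta$ into the effective step size,'' breaks your own step~(ii): once the direction contains past gradients evaluated at past iterates, it is no longer an unbiased estimator of the gradient at the current iterate, so the cross term does not vanish and (perturbed) strong convexity cannot be applied to the expected direction. A rigorous SGDM treatment would need an auxiliary-sequence argument, which neither your sketch nor the paper supplies; the paper simply analyzes the momentum-free surrogate, and matching its proof requires dropping the momentum from the analysis rather than crudely bounding it by $G$.

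Second, partial participation does not belong in this lemma. The stated bound contains no $\theta$: the paper proves the lemma under full participation, and the $\lvert\mathcal{J}\rvert\sum_{j}p_{j}^{2}$ factor arises from Cauchy--Schwarz applied to the weighted sum of per-client gradients, not from sampling variance. The participation rate enters only in the proof of Theorem~\ref{main_theorem_1}, as a separate perturbation term $\mathbb{E}[\Vert\hat{\mathcal{W}}^{n}-\Omega^{n}\Vert_{2}^{2}]\leq\frac{1}{\theta}(\eta^{n})^{2}G^{2}\lvert\mathcal{J}\rvert T^{2}\sum_{j\in\mathcal{J}}p_{j}^{2}$ comparing the partially sampled update to the full-participation one $\Omega^{n}$; that is what later produces the $(1+\frac{1}{\theta})G^{2}$ constant in $A$. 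Carrying $\theta$ into the lemma, as your step~(iii) proposes, yields a bound that does not match the one stated. A smaller but real point: your step~(ii) cannot use plain strong convexity, because the gradients are evaluated at the drifted local iterates $\hat{\mathcal{W}}^{(n-1,\tau)}$ rather than at the round anchor $\hat{\mathcal{W}}^{n-1}$; the paper handles this mismatch with the perturbed strong-convexity inequality of~\citep{karimireddy2020scaffold} together with the client-drift bound of~\citep{han2024convergence}, and it is precisely this step that generates the $24(2\sigma^{2}+G^{2})(\eta^{n})^{3}T^{3}L$ contribution, i.e., the $\sum_{j}p_{j}$ term in the final bound that you attributed to ``bias/drift'' without identifying the mechanism.
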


\begin{proof}
We define the convergence error at round $n$ as $\Delta^{n}=\mathbb{E}[\Vert \hat{\mathcal{W}}^{n}-\hat{\mathcal{W}}^{*} \Vert^{2}_{2}]$. At the beginning of round $n$, the submodel is initialized with $\hat{\mathcal{W}}^{n-1}$ and updated over $T$ steps of SGD. The global submodel $\hat{\mathcal{W}}^{n}$ is then obtained by weighted aggregating: $\hat{\mathcal{W}}^{n} = \hat{\mathcal{W}}^{n-1} - \eta^{n}\sum_{j\in\mathcal{J}}p_{j}\sum_{\tau\in[T]}g_{j}^{(n, \tau)}$ where $g_{j}^{(n, \tau)}=\mathcal{L}_{\mathcal{B}_{j}^{\tau}}(\hat{\mathcal{W}}^{(n, \tau)})$ denotes the stochastic gradients of on the mini-batch $\mathcal{B}_{j}^{\tau}\subseteq\mathcal{D}_{j}$. Therefore, we have

\small
\allowdisplaybreaks
\begin{align}
&\mathbb{E}[\Vert \hat{\mathcal{W}}^{n}-\hat{\mathcal{W}}^{*} \Vert^{2}_{2}]  \nonumber \\
&= \mathbb{E}\big[\Vert \hat{\mathcal{W}}^{n-1} 
    - \eta^{n}\!\sum_{j\in\mathcal{J}}p_{j}\!\sum_{\tau\in[T]}g_{j}^{(n, \tau)} 
    -\hat{\mathcal{W}}^{*}\Vert^{2}_{2}\big] \nonumber \\
&= \mathbb{E}\big[\Vert \hat{\mathcal{W}}^{n-1} -\hat{\mathcal{W}}^{*} 
    - \eta^{n}\!\sum_{\tau\in[T]}\nabla\mathcal{L}(\hat{\mathcal{W}}^{(n-1, \tau)}) \nonumber \\
&\hspace{2em} + \eta^{n}\!\sum_{\tau\in[T]}\nabla\mathcal{L}(\hat{\mathcal{W}}^{(n-1, \tau)}) 
    - \eta^{n}\!\sum_{j\in\mathcal{J}}p_{j}\!\sum_{\tau\in[T]}g_{j}^{(n, \tau)} 
    \Vert^{2}_{2}\big] \nonumber \\
&= \mathbb{E}\big[\Vert \hat{\mathcal{W}}^{n-1} -\hat{\mathcal{W}}^{*} 
    - \eta^{n}\!\sum_{\tau\in[T]}\nabla\mathcal{L}(\hat{\mathcal{W}}^{(n-1, \tau)}) \Vert^{2}_{2}\big] \nonumber \\
&\hspace{2em} + \mathbb{E}\big[\Vert \eta^{n}\!\sum_{\tau\in[T]}\nabla\mathcal{L}(\hat{\mathcal{W}}^{(n-1, \tau)}) 
    - \eta^{n}\!\sum_{j\in\mathcal{J}}p_{j}\!\sum_{\tau\in[T]}g_{j}^{(n, \tau)} \Vert^{2}_{2}\big] \nonumber \\
&\hspace{2em} + 2\eta^{n}\big\langle\mathbb{E}[\hat{\mathcal{W}}^{n-1} -\hat{\mathcal{W}}^{*} 
    - \eta^{n}\!\sum_{\tau\in[T]}\nabla\mathcal{L}(\hat{\mathcal{W}}^{(n-1, \tau)})], \nonumber \\
&\hspace{5em} \mathbb{E}[\sum_{\tau\in[T]}\nabla\mathcal{L}(\hat{\mathcal{W}}^{(n-1, \tau)}) 
    - \sum_{j\in\mathcal{J}}p_{j}\!\sum_{\tau\in[T]}g_{j}^{(n, \tau)}]\big\rangle \nonumber \\
&= \mathbb{E}\big[\Vert \hat{\mathcal{W}}^{n-1} -\hat{\mathcal{W}}^{*} 
    - \eta^{n}\!\sum_{\tau\in[T]}\nabla\mathcal{L}(\hat{\mathcal{W}}^{(n-1, \tau)}) \Vert^{2}_{2}\big] \nonumber \\
&\hspace{2em} + \mathbb{E}\big[\Vert \eta^{n}\!\sum_{\tau\in[T]}\nabla\mathcal{L}(\hat{\mathcal{W}}^{(n-1, \tau)}) 
    - \eta^{n}\!\sum_{j\in\mathcal{J}}p_{j}\!\sum_{\tau\in[T]}g_{j}^{(n, \tau)} \Vert^{2}_{2}\big],
\label{eq:supp_convergence_7}
\end{align}
where the last equality uses the facts: 1) unbiased stochastic gradients $\mathbb{E}_{\tau\sim[T]}[g_{j}^{(n, \tau)}]=\nabla\mathcal{L}_{j}$; and 2) the global objective is the weighted averaging of local objectives in Equation~\ref{eq:supp_convergence_1}, therefore $\mathbb{E}[\sum_{\tau\in[T]}\nabla\mathcal{L}(\hat{\mathcal{W}}^{(n-1, \tau)}) - \sum_{j\in\mathcal{J}}p_{j}\sum_{\tau\in[T]}g_{j}^{(n, \tau)}]=0$.

We first bound the first term on the right-hand side of Equation~\ref{eq:supp_convergence_7}.
\begin{equation}
\small
\begin{aligned}
    &\mathbb{E}[\Vert \hat{\mathcal{W}}^{n-1} - \hat{\mathcal{W}}^{*} - \eta^{n}\sum_{\tau\in[T]}\nabla\mathcal{L}(\hat{\mathcal{W}}^{(n-1, \tau)}) \Vert^{2}_{2}] \\
    &= \mathbb{E}[\Vert \hat{\mathcal{W}}^{n-1} -\hat{\mathcal{W}}^{*} \Vert^{2}_{2}] \\
    &\hspace{2em}- 2\eta^{n}\mathbb{E}[\langle \hat{\mathcal{W}}^{n-1} -\hat{\mathcal{W}}^{*}, \sum_{\tau\in[T]}\nabla\mathcal{L}(\hat{\mathcal{W}}^{(n-1, \tau)}) \rangle]\\
    &\hspace{2em}+ (\eta^{n})^{2}\mathbb{E}[\Vert \sum_{\tau\in[T]}\nabla\mathcal{L}(\hat{\mathcal{W}}^{(n-1, \tau)}) \Vert^{2}_{2}]\\
    &= \mathbb{E}[\Vert \hat{\mathcal{W}}^{n-1} -\hat{\mathcal{W}}^{*} \Vert^{2}_{2}] \\
    &\hspace{2em}- 2\eta^{n}\mathbb{E}[\langle \hat{\mathcal{W}}^{n-1} -\hat{\mathcal{W}}^{*}, \sum_{j\in{\mathcal{J}}}\sum_{\tau\in[T]}p_j\nabla\mathcal{L}_{j}(\hat{\mathcal{W}}^{(n-1, \tau)}) \rangle] \\
    &\hspace{2em}+ (\eta^{n})^{2}\mathbb{E}[\Vert \sum_{j\in{\mathcal{J}}}\sum_{\tau\in[T]}p_j\nabla\mathcal{L}_{j}(\hat{\mathcal{W}}^{(n-1, \tau)}) \Vert^{2}_{2}],
\label{eq:supp_convergence_8}
\end{aligned}
\end{equation}
where we expand the term $\nabla\mathcal{L}(\hat{\mathcal{W}}^{(n-1, \tau)})=\sum_{j\in{\mathcal{J}}}p_j\nabla\mathcal{L}_{j}(\hat{\mathcal{W}}^{(n-1, \tau)})$. 

For the second term in the Equation~\ref{eq:supp_convergence_8}, we have:
\allowdisplaybreaks
    \small
    \begin{align}
    &-2\eta^{n}\mathbb{E}[\langle \hat{\mathcal{W}}^{n-1} -\hat{\mathcal{W}}^{*}, \sum_{j\in{\mathcal{J}}}\sum_{\tau\in[T]}p_j\nabla\mathcal{L}_{j}(\hat{\mathcal{W}}^{(n-1, \tau)}) \rangle]\nonumber \\
    &=-2\eta^{n}\sum_{j\in{\mathcal{J}}}\sum_{\tau\in[T]}p_j \mathbb{E}[\langle \hat{\mathcal{W}}^{n-1} -\hat{\mathcal{W}}^{*}, \nabla\mathcal{L}_{j}(\hat{\mathcal{W}}^{(n-1, \tau)}) \rangle] \nonumber \\
    &\leq -2\eta^{n}\sum_{j\in{\mathcal{J}}}\sum_{\tau\in[T]}p_j\mathbb{E}[\mathcal{L}_{j}(\hat{\mathcal{W}}^{n-1}) - \mathcal{L}_{j}(\hat{\mathcal{W}}^{*}) \nonumber \\ 
    &\hspace{2em}+\frac{\mu}{4}\Vert \hat{\mathcal{W}}^{n-1} - \hat{\mathcal{W}}^{*} \Vert^{2}_{2} - L\Vert \hat{\mathcal{W}}^{(n-1, \tau)} - \hat{\mathcal{W}}^{n-1} \Vert^{2}_{2}] \nonumber \\
    &=-2\eta^{n}T\mathbb{E}[\sum_{j\in{\mathcal{J}}}p_j\mathcal{L}_{j}(\hat{\mathcal{W}}^{n-1}) - \sum_{j\in{\mathcal{J}}}p_j\mathcal{L}_{j}(\hat{\mathcal{W}}^{*})] \nonumber \\
    &\hspace{2em}-\frac{\eta^{n}\mu T}{2}\sum_{j\in{\mathcal{J}}}p_j\mathbb{E}[\Vert \hat{\mathcal{W}}^{n-1} - \hat{\mathcal{W}}^{*} \Vert^{2}_{2}] \nonumber \\
    &\hspace{2em}+2\eta^{n}L\sum_{j\in{\mathcal{J}}}\sum_{\tau\in[T]}p_j\mathbb{E}[\Vert \hat{\mathcal{W}}^{(n-1, \tau)} - \hat{\mathcal{W}}^{n-1} \Vert^{2}_{2}] \nonumber \\
    &=-2\eta^{n}T\mathbb{E}[\mathcal{L}(\hat{\mathcal{W}}^{n-1}) - \mathcal{L}(\hat{\mathcal{W}}^{*})] \nonumber \\
    &\hspace{2em}-\frac{\eta^{n}\mu T}{2}\mathbb{E}[\Vert \hat{\mathcal{W}}^{n-1}-\hat{\mathcal{W}}^{*} \Vert^{2}_{2}] \nonumber \\
    &\hspace{2em}+2\eta^{n}L\sum_{j\in{\mathcal{J}}}\sum_{\tau\in[T]}p_j\mathbb{E}[\Vert \hat{\mathcal{W}}^{(n-1, \tau)} - \hat{\mathcal{W}}^{n-1} \Vert^{2}_{2}] \nonumber \\
    &\leq -2\eta^{n}T\mathbb{E}[\mathcal{L}(\hat{\mathcal{W}}^{n-1}) - \mathcal{L}(\hat{\mathcal{W}}^{*})] \nonumber \\
    &\hspace{2em} -\frac{\eta^{n}\mu T}{2}\mathbb{E}[\Vert \hat{\mathcal{W}}^{n-1}-\hat{\mathcal{W}}^{*} \Vert^{2}_{2}] \nonumber \nonumber \\
    &\hspace{2em} +24(2\sigma^{2}+ G^{2})(\eta^{n})^{3}T^{3}L, 
    \label{eq:supp_convergence_9}
    \end{align} 
where we employ \textit{Lemma 5} in work~\citep{karimireddy2020scaffold} for the first inequality, and \textit{Lemma C.5} in work~\citep{han2024convergence} for the second inequality. Besides, we also use the fact $\sum_{j\in{\mathcal{J}}}p_j=1$.

For the last term in the Equation~\ref{eq:supp_convergence_8}, we have:
\begin{equation}
    \small
    \begin{aligned}
    &(\eta^{n})^{2}\mathbb{E}[\Vert \sum_{j\in{\mathcal{J}}}\sum_{\tau\in[T]}p_j\nabla\mathcal{L}_{j}(\hat{\mathcal{W}}^{(n-1, \tau)}) \Vert^{2}_{2}]\\
    &\leq (\eta^{n})^{2}T\lvert\mathcal{J}\lvert \sum_{j\in{\mathcal{J}}}\sum_{\tau\in[T]}p_{j}^{2}\mathbb{E}[\Vert \nabla\mathcal{L}_{j}(\hat{\mathcal{W}}^{(n-1, \tau)})\Vert^{2}_{2}]\\
    &=(\eta^{n})^{2}T\lvert\mathcal{J}\lvert \sum_{j\in{\mathcal{J}}}\sum_{\tau\in[T]}p_{j}^{2}\mathbb{E}[\Vert \nabla\mathcal{L}_{j}(\hat{\mathcal{W}}^{(n-1, \tau)}) - \\
    &\hspace{2em}g_{j}^{(n,\tau)} + g_{j}^{(n,\tau)} \Vert^{2}_{2}]\\
    &=(\eta^{n})^{2}T\lvert\mathcal{J}\lvert \sum_{j\in{\mathcal{J}}}p_{j}^{2}\sum_{\tau\in[T]}
    (\mathbb{E}[\Vert \nabla\mathcal{L}_{j}(\hat{\mathcal{W}}^{(n-1, \tau)}) - g_{j}^{(n,\tau)} \Vert^{2}_{2}]\\
    &\hspace{2em}+ \mathbb{E}[\Vert g_{j}^{(n,\tau)} \Vert^{2}_{2}])\\
    &\leq (\sigma^{2} + G^{2})(\eta^{n})^{2}T^{2}\lvert\mathcal{J}\lvert\sum_{j\in{\mathcal{J}}}p_{j}^{2}.
    \end{aligned}
    \label{eq:supp_convergence_10}
\end{equation}

The first inequality follows from Cauchy-Schwarz inequality; the second equality holds due to the vanishing cross-term, since $\mathbb{E}[\nabla\mathcal{L}_{j}(\hat{\mathcal{W}}^{(n-1, \tau)}) - g_{j}^{(n,\tau)}]=0$; and the last inequality is based on the bounded variance and gradients in Assumptions~\ref{main_assumption_3} and \ref{main_assumption_4}.

By substituting Equation~\ref{eq:supp_convergence_9} and Equation~\ref{eq:supp_convergence_10} into Equation~\ref{eq:supp_convergence_8}, we obtain the following bound:
\allowdisplaybreaks
    \small
    \begin{align}
    &\mathbb{E}[\Vert \hat{\mathcal{W}}^{n-1} -\hat{\mathcal{W}}^{*} - \eta^{n}\sum_{\tau\in[T]}\nabla\mathcal{L}(\hat{\mathcal{W}}^{(n-1, \tau)}) \Vert^{2}_{2}] \nonumber \\
    &\leq  \mathbb{E}[\Vert \hat{\mathcal{W}}^{n-1} -\hat{\mathcal{W}}^{*} \Vert^{2}_{2}] \nonumber \\
    &\hspace{2em}-2\eta^{n}T\mathbb{E}[\mathcal{L}(\hat{\mathcal{W}}^{n-1})- \mathcal{L}(\hat{\mathcal{W}}^{*})] \nonumber \\
    &\hspace{2em}-\frac{\eta^{n}\mu T}{2}\mathbb{E}[\Vert \hat{\mathcal{W}}^{n-1} - \hat{\mathcal{W}}^{*} \Vert^{2}_{2}] \nonumber \\
    &\hspace{2em}+24(2\sigma^{2} + G^{2})(\eta^{n})^{3}T^{3}L \nonumber \\
    &\hspace{2em}+ (\sigma^{2} + G^{2})(\eta^{n})^{2}T^{2}\lvert\mathcal{J}\lvert\sum_{j\in{\mathcal{J}}}p_{j}^{2} \nonumber \\
    &= (1-\frac{\eta^{n}\mu T}{2})\mathbb{E}[\Vert \hat{\mathcal{W}}^{n-1}- \hat{\mathcal{W}}^{*} \Vert^{2}_{2}] \nonumber \\
    &\hspace{2em} -2\eta^{n}T\mathbb{E}[\mathcal{L}(\hat{\mathcal{W}}^{n-1}) - \mathcal{L}(\hat{\mathcal{W}}^{*})] \nonumber \\
    &\hspace{2em}+24(2\sigma^{2} + G^{2})(\eta^{n})^{3}T^{3}L \nonumber \\
    &\hspace{2em}+ (\sigma^{2} + G^{2})(\eta^{n})^{2}T^{2}\lvert\mathcal{J}\lvert\sum_{j\in{\mathcal{J}}}p_{j}^{2}.
    \label{eq:supp_convergence_11}
    \end{align}

We then bound the second term on the right-hand side of Equation~\ref{eq:supp_convergence_7}.
\allowdisplaybreaks
    \small
    \begin{align}
    &\mathbb{E}[\Vert \eta^{n}\sum_{\tau\in[T]}\nabla\mathcal{L}(\hat{\mathcal{W}}^{(n-1, \tau)}) - \eta^{n}\sum_{j\in\mathcal{J}}p_{j}\sum_{\tau\in[T]}g_{j}^{(n, \tau)} \Vert^{2}_{2}] \nonumber \\
    &=(\eta^{n})^{2}\mathbb{E}[\Vert \sum_{j\in\mathcal{J}}p_{j}\sum_{\tau\in[T]}\nabla\mathcal{L}_{j}(\hat{\mathcal{W}}^{(n-1, \tau)})- \nonumber \\
    &\hspace{2em}\sum_{j\in\mathcal{J}}p_{j}\sum_{\tau\in[T]}g_{j}^{(n, \tau)}\Vert^{2}_{2}] \nonumber \\
    &=(\eta^{n})^{2}\mathbb{E}[\Vert \sum_{j\in\mathcal{J}}\sum_{\tau\in[T]}p_{j}(\nabla\mathcal{L}_{j}(\hat{\mathcal{W}}^{(n-1, \tau)})-g_{j}^{(n, \tau)})\Vert^{2}_{2}] \nonumber \\
    &\leq(\eta^{n})^{2}T\sum_{\tau\in[T]}\mathbb{E}[\Vert\sum_{j\in\mathcal{J}}p_{j}(\nabla\mathcal{L}_{j}(\hat{\mathcal{W}}^{(n-1, \tau)})-g_{j}^{(n, \tau)})\Vert^{2}_{2}] \nonumber \\
    &\leq(\eta^{n})^{2}T\lvert\mathcal{J}\lvert\sum_{\tau\in[T]}\sum_{j\in\mathcal{J}}\mathbb{E}[\Vert p_{j}(\nabla\mathcal{L}_{j}(\hat{\mathcal{W}}^{(n-1, \tau)})-g_{j}^{(n, \tau)})\Vert^{2}_{2}] \nonumber \\
    &\leq(\eta^{n})^{2}T\lvert\mathcal{J}\lvert\sum_{\tau\in[T]}\sum_{j\in\mathcal{J}}p_{j}^{2}\sigma^{2} \nonumber \\
    &=(\eta^{n})^{2}T^{2}\sigma^{2}\lvert\mathcal{J}\lvert\sum_{j\in\mathcal{J}}p_{j}^{2},
    \label{eq:supp_convergence_12}
    \end{align}  
where we apply the Cauchy-Schwarz inequality to the first and second inequalities, and use the bounded variance in Assumption~\ref{main_assumption_3} to obtain the last inequality.

By substituting Equation~\ref{eq:supp_convergence_11} and Equation~\ref{eq:supp_convergence_12}, the convergence error in Equation~\ref{eq:supp_convergence_7} can be bounded as follows:
\begin{equation}
    \small
    \begin{aligned}
    &\Delta^{n}=\mathbb{E}[\Vert \hat{\mathcal{W}}^{n}-\hat{\mathcal{W}}^{*} \Vert^{2}_{2}]\\
    &\leq(1-\frac{\eta^{n}\mu T}{2})\mathbb{E}[\Vert \hat{\mathcal{W}}^{n-1} - \hat{\mathcal{W}}^{*} \Vert^{2}_{2}] \\
    &\hspace{2em}-2\eta^{n}T\mathbb{E}[\mathcal{L}(\hat{\mathcal{W}}^{n-1}) - \mathcal{L}(\hat{\mathcal{W}}^{*})]\\
    &\hspace{2em}+24(2\sigma^{2} + G^{2})(\eta^{n})^{3}T^{3}L + (\sigma^{2} + G^{2})(\eta^{n})^{2}T^{2}\lvert\mathcal{J}\lvert\sum_{j\in{\mathcal{J}}}p_{j}^{2}\\
    &\hspace{2em}+(\eta^{n})^{2}T^{2}\sigma^{2}\lvert\mathcal{J}\lvert\sum_{j\in\mathcal{J}}p_{j}^{2}\\
    &=(1-\frac{\eta^{n}\mu T}{2})\Delta^{n-1} -2\eta^{n}T\mathbb{E}[\mathcal{L}(\hat{\mathcal{W}}^{n-1}) - \mathcal{L}(\hat{\mathcal{W}}^{*})]\\
    &\hspace{2em}+24(2\sigma^{2} + G^{2})(\eta^{n})^{3}T^{3}L \\
    &\hspace{2em}+ (2\sigma^{2} + G^{2})(\eta^{n})^{2}T^{2}\lvert\mathcal{J}\lvert\sum_{j\in{\mathcal{J}}}p_{j}^{2}\\
    &\leq (1-\frac{\eta^{n}\mu T}{2})\Delta^{n-1} + 24(2\sigma^{2} + G^{2})(\eta^{n})^{3}T^{3}L \\
    &\hspace{2em}+ (2\sigma^{2} + G^{2})(\eta^{n})^{2}T^{2}\lvert\mathcal{J}\lvert\sum_{j\in{\mathcal{J}}}p_{j}^{2}.
    \end{aligned}  
    \label{eq:supp_convergence_13}
\end{equation}

Suppose the learning rate is small enough $\eta^{n}=\frac{2\beta}{T(\gamma+n-1)}$, where $\beta=\frac{2}{\mu}$, $\gamma=\frac{8L}{\mu}-1$, such that there exists $v=max\{ 
    \frac{16 \lvert\mathcal{J}\lvert \sum_{j\in\mathcal{J}} p_{j}^{2}(2\sigma^{2}+G^{2})}{\mu^{2}}+
    \frac{1536L  \sum_{j\in\mathcal{J}} p_{j}(2\sigma^{2}+G^{2})}{\mu^{3}(\gamma +1)},
    (\gamma+1) \mathbb{E}[\Vert \hat{\mathcal{W}}^{0} - \hat{\mathcal{W}}^{*}\Vert^{2}_{2}]
    \}$, satisfying $\Delta^{n}\leq\frac{v}{\gamma+n}$. Assuming this bound holds for round $n$, we can show that it also holds for round $n + 1$. The Equation~\ref{eq:supp_convergence_13} gives the bound for $n+1$ as

\allowdisplaybreaks
    \small
    \begin{align}
    &\Delta^{n+1}\leq (1-\frac{\eta^{n+1}\mu T}{2})\Delta^{n} + 24(2\sigma^{2} + G^{2})(\eta^{n+1})^{3}T^{3}L \nonumber\\
    &\hspace{4em}+ (2\sigma^{2} + G^{2})(\eta^{n+1})^{2}T^{2}\lvert\mathcal{J}\lvert\sum_{j\in{\mathcal{J}}}p_{j}^{2} \nonumber \\
    &\hspace{2.5em}\leq (1-\frac{\mu\beta}{\gamma+n})\Delta^{n} \nonumber \\
    &\hspace{2.5em}\leq (1-\frac{\mu\beta}{\gamma+n})\frac{v}{\gamma+n} \nonumber \\
    &\hspace{2.5em}= \frac{\gamma+n-2}{(\gamma+n)^2}v \nonumber \\
    &\hspace{2.5em}\leq \frac{\gamma+n-1}{(\gamma+n)^2}v \nonumber \\
    &\hspace{2.5em}\leq \frac{v}{\gamma+n+1},
    \label{eq:supp_convergence_14}
    \end{align} 
where we use the $(\gamma+n-1)(\gamma+n+1)\leq(\gamma+n)^{2}$ for the last inequality.

The Equation~\ref{eq:supp_convergence_14} indicates that it also holds for $n+1$. Therefore, we have
\begin{equation}
    \small
    \begin{aligned}
    &\mathbb{E}[\Vert \hat{\mathcal{W}}^{n} - \hat{\mathcal{W}}^{*}\Vert^{2}_{2}]\\
    &\leq\frac{v}{\gamma+n}\\
    &= \frac{1}{\gamma+n} max\{ 
    \frac{16 \lvert\mathcal{J}\lvert \sum_{j\in\mathcal{J}} p_{j}^{2}(2\sigma^{2}+G^{2})}{\mu^{2}}+\\
    &\hspace{2em}\frac{1536L  \sum_{j\in\mathcal{J}} p_{j}(2\sigma^{2}+G^{2})}{\mu^{3}(\gamma +1)},
    (\gamma+1) \mathbb{E}[\Vert \hat{\mathcal{W}}^{0} - \hat{\mathcal{W}}^{*}\Vert^{2}_{2}]
    \}\\
    &\leq 
    \frac{ 16 \lvert\mathcal{J}\lvert \sum_{j\in\mathcal{J}} p_{j}^{2}(2\sigma^{2}+G^{2}) }{ \mu^{2}(\gamma+n) } +
    \frac{ 1536L  \sum_{j\in\mathcal{J}} p_{j}(2\sigma^{2}+G^{2}) }{ \mu^{3}(\gamma+n)(\gamma+1) }\\
    &\hspace{2em}+\frac{ (\gamma+1) \mathbb{E}[\Vert \hat{\mathcal{W}}^{0} - \hat{\mathcal{W}}^{*}\Vert^{2}_{2}]}{ (\gamma+n) },
    \end{aligned}  
    \label{eq:supp_convergence_15}
\end{equation}
\end{proof}

\subsection{Proof of Theorem~\ref{main_theorem_1}}

To discuss the convergence guarantee of \SysName under partial client participation, we first formulate the client selection process before the proof of Theorem~\ref{main_theorem_1}. 

At each communication round $n$, the server randomly selects a subset of clients with a fixed participation rate $\theta \in (0, 1]$. We can formulate client selection as
\begin{equation}
    \small
    \delta_{j}^{n}=
    \begin{cases}
        0 & (j\not\in\mathcal{J}^{n}) \\
        1 & (j\in\mathcal{J}^{n}),
    \end{cases}
    \label{eq:supp_convergence_16}
\end{equation}
where $\delta_{j}^{n}\sim Ber(\theta)$ follows a Bernoulli distribution: the probability of $j$-th client participating in round $n$ is $s_{j}=Pr(\delta_{j}^{n}=1)=\theta, \forall j\in\mathcal{J}, n\in[N]$. For instance, we set $\theta=0.2$ in evaluations across CIFAR10, CIFAR100, Tiny-ImageNet, and Shakespeare benchmarks.

Under the partial participation, the update rule of the submodel $\hat{\mathcal{W}}^{n}$ in round $n$ is then reformulated as
\begin{equation}
    \small
    \begin{aligned}
    \hat{\mathcal{W}}^{n} &= \hat{\mathcal{W}}^{n-1} - \eta^{n}\sum_{j\in\mathcal{J}^{n}}p_{j}\sum_{\tau\in[T]}g_{j}^{(n, \tau)}\\
    &=\hat{\mathcal{W}}^{n-1} - \eta^{n}\sum_{j\in\mathcal{J}}\frac{p_{j}\delta_{j}^{n}}{\theta}\sum_{\tau\in[T]}g_{j}^{(n, \tau)}.
    \end{aligned} 
    \label{eq:supp_convergence_17}
\end{equation}

Suppose that $\Omega$ is the submodel updated under full participation, therefore, Equation~\ref{eq:supp_convergence_13} holds
\begin{equation}
    \small
    \begin{aligned}
    &\Delta^{n}=\mathbb{E}[\Vert \Omega^{n}-\hat{\mathcal{W}}^{*} \Vert^{2}_{2}]\\
    &\leq (1-\frac{\eta^{n}\mu T}{2})\Delta^{n-1} + 24(2\sigma^{2} + G^{2})(\eta^{n})^{3}T^{3}L \\
    &\hspace{2em}+ (2\sigma^{2} + G^{2})(\eta^{n})^{2}T^{2}\lvert\mathcal{J}\lvert\sum_{j\in{\mathcal{J}}}p_{j}^{2}.
    \end{aligned}  
    \label{eq:supp_convergence_18}
\end{equation}

We define the gap between the submodel updated under partial participation $\hat{\mathcal{W}}^{n}$ and that under full participation $\Omega^{n}$.
\allowdisplaybreaks
    \small
    \begin{align}
    \mathbb{E}[\Vert \hat{\mathcal{W}}^{n} - \Omega^{n} \Vert^{2}_{2}] 
    &=\mathbb{E}[\Vert \hat{\mathcal{W}}^{n} - \hat{\mathcal{W}}^{n-1} + \hat{\mathcal{W}}^{n-1} - \Omega^{n}  \Vert^{2}_{2}] \nonumber \\
    &\leq\mathbb{E}[\Vert \hat{\mathcal{W}}^{n} - \hat{\mathcal{W}}^{n-1}\Vert^{2}_{2}] \nonumber \\
    &=\mathbb{E}[\Vert\eta^{n}\sum_{j\in\mathcal{J}}\frac{p_{j}\delta_{j}^{n}}{\theta}\sum_{\tau\in[T]}g_{j}^{(n, \tau)}\Vert^{2}_{2}] \nonumber \\
    &\leq\frac{1}{\theta^{2}}(\eta^{n})^{2}\lvert\mathcal{J}\lvert T\sum_{j\in\mathcal{J}}\sum_{\tau\in[T]}\mathbb{E}[\Vert\delta_{j}^{n}p_{j}g_{j}^{(n, \tau)}\Vert^{2}_{2}] \nonumber \\
    &\leq\frac{1}{\theta^{2}}(\eta^{n})^{2}\lvert\mathcal{J}\lvert T\sum_{j\in\mathcal{J}}\sum_{\tau\in[T]}p_{j}^{2}G^{2}\mathbb{E}[\Vert\delta_{j}^{n}\Vert^{2}_{2}] \nonumber \\
    &=\frac{1}{\theta^{2}}(\eta^{n})^{2}\lvert\mathcal{J}\lvert T^{2}\sum_{j\in\mathcal{J}}p_{j}^{2}G^{2}\theta \nonumber \\
    &=\frac{1}{\theta}(\eta^{n})^{2}G^{2}\lvert\mathcal{J}\lvert T^{2}\sum_{j\in\mathcal{J}}p_{j}^{2},
    \label{eq:supp_convergence_19}
    \end{align}
where we use the following facts: 1) $\mathbb{E}[\hat{\mathcal{W}}^{n}-\hat{\mathcal{W}}^{n-1}]=\Omega^{n} - \hat{\mathcal{W}}^{n-1}$ for the first inequality; 2) Cauchy-Schwarz inequality for the second inequality; 3) Equation~\ref{eq:supp_convergence_17} for the second equality; and 4) for the Bernoulli distribution, we have $\mathbb{E}[\delta_{j}^{n}]=\mathbb{E}[\Vert\delta_{j}^{n}\Vert^{2}_{2}]=\theta$.

Therefore, we obtain the convergence error bound, similar to Equation~\ref{eq:supp_convergence_13}, now extended to the case of partial client participation.

\begin{equation}
    \small
    \begin{aligned}
    \Delta^{n}&=\mathbb{E}[\Vert \hat{\mathcal{W}}^{n}-\hat{\mathcal{W}}^{*} \Vert^{2}_{2}]\\
    &=\mathbb{E}[\Vert \hat{\mathcal{W}}^{n}-\Omega^{n}+\Omega^{n}-\hat{\mathcal{W}}^{*} \Vert^{2}_{2}]\\
    &\leq\mathbb{E}[\Vert \hat{\mathcal{W}}^{n}-\Omega^{n} \Vert^{2}_{2}] + \mathbb{E}[\Vert\Omega^{n}-\hat{\mathcal{W}}^{*} \Vert^{2}_{2}]\\
    &\leq (1-\frac{\eta^{n}\mu T}{2})\Delta^{n-1} + 24(2\sigma^{2} + G^{2})(\eta^{n})^{3}T^{3}L \\
    &\hspace{2em}+ (2\sigma^{2} + G^{2})(\eta^{n})^{2}T^{2}\lvert\mathcal{J}\lvert\sum_{j\in{\mathcal{J}}}p_{j}^{2} \\
    &\hspace{2em}+ \frac{1}{\theta}G^{2}(\eta^{n})^{2} T^{2}\lvert\mathcal{J}\lvert\sum_{j\in\mathcal{J}}p_{j}^{2}.
    \end{aligned}  
    \label{eq:supp_convergence_20}
\end{equation}

Similar to Equation~\ref{eq:supp_convergence_15}, we extend Lemma~\ref{supp_lemma_1} to the setting with partial client participation.
\begin{equation}
    \small
    \begin{aligned}
    \Delta^{n}&=\mathbb{E}[\Vert \hat{\mathcal{W}}^{n} - \hat{\mathcal{W}}^{*}\Vert^{2}_{2}]\\
    &\leq \frac{ 16 \lvert\mathcal{J}\lvert \sum_{j\in\mathcal{J}} p_{j}^{2}(2\sigma^{2}+G^{2}+\frac{G^{2}}{\theta}) }{ \mu^{2}(\gamma+n) } \\
    &\hspace{2em}+\frac{ 1536L  \sum_{j\in\mathcal{J}} p_{j}(2\sigma^{2}+G^{2}) }{ \mu^{3}(\gamma+n)(\gamma+1) } \\
    &\hspace{2em}+\frac{ (\gamma+1) \mathbb{E}[\Vert \hat{\mathcal{W}}^{0} - \hat{\mathcal{W}}^{*}\Vert^{2}_{2}]}{ (\gamma+n)}.
    \end{aligned}  
    \label{eq:supp_convergence_21}
\end{equation}

Recalling the Proposition~\ref{supp_proposition_1}, and substituting the submodel $\hat{\mathcal{W}}^{n}$ in Equation~\ref{eq:supp_convergence_21} with the client-side submodel $\mathcal{W}_{c}^{N}$ and server-side submodel $\mathcal{W}_{s}^{N}$ at $N$-th communication round, we have Theorem~\ref{main_theorem_1}.
\allowdisplaybreaks
    \small
    \begin{align}
    &\mathbb{E}[\mathcal{L}(\mathcal{W}^{N})] - \mathcal{L}(\mathcal{W}^{*}) \nonumber \\
    &\leq \frac{L}{2}(\mathbb{E}[\Vert \mathcal{W}_{c}^{N} - \mathcal{W}_{c}^{*}\Vert^{2}_{2}] + \mathbb{E}[\Vert \mathcal{W}_{s}^{N} - \mathcal{W}_{s}^{*}\Vert^{2}_{2}]) \nonumber \\
    &\leq \frac{ 16 \lvert\mathcal{J}\lvert L \sum_{j\in\mathcal{J}} p_{j}^{2}(2\sigma^{2}+G^{2}+\frac{G^{2}}{\theta}) }{ \mu^{2}(\gamma+n) } \nonumber \\
    &\hspace{2em}+\frac{ 1536 L^{2} \sum_{j\in\mathcal{J}} p_{j}(2\sigma^{2}+G^{2}) }{ \mu^{3}(\gamma+n)(\gamma+1) } \nonumber \\
    &\hspace{2em}+\frac{ (\gamma+1)L\mathbb{E}[\Vert \hat{\mathcal{W}}^{0} - \hat{\mathcal{W}}^{*}\Vert^{2}_{2}]}{ (\gamma+n)} \nonumber \nonumber \\
    &\leq \mathcal{O}(\frac{A} {(\gamma+N)}) + \mathcal{O}(\frac{B} {(\gamma+N)}) + \mathcal{O}(\frac{C} {{(\gamma+N)}}).
    \label{eq:supp_convergence_22}
    \end{align} 
We use the $\mathcal{O}$ to swallow all constants. The $A$, $B$, $C$ in the error bound follows $A = \lvert\mathcal{J}\lvert \sum_{j\in\mathcal{J}}p_{j}^{2}(2\sigma^{2}+(1+\frac{1}{\theta})G^{2})$, $B = \sum_{j\in\mathcal{J}}p_{j}(2\sigma^{2}+G^{2})$, $C = \Vert \mathcal{W}^{0} - \mathcal{W}^{*} \Vert$.

\end{document}